\def\alphabet{abcdefghijklmnopqrstuvwxyzABCDEFGHIJKLMNOPQRST123456789}
\renewcommand{\vec}[1]{
\IfSubStr{\alphabet}{#1}{
\ensuremath{\mbf{\MakeLowercase{#1}}}
}{
\ensuremath{\bm{\MakeLowercase{#1}}}
}
}
\newcommand{\mat}[1]{
\IfSubStr{\alphabet}{#1}{
\ensuremath{\mbf{\MakeUppercase{#1}}}
}{
\ensuremath{\bm{\MakeUppercase{#1}}}
}
}
\newcommand{\set}[1]{\ensuremath{\mathcal{\MakeUppercase{#1}}}}
\def\eqref#1{equation~\ref{#1}}
\def\veta{{\vec{\eta}}}
\def\va{{\vec{a}}}
\def\vw{{\vec{w}}}
\def\vx{{\vec{x}}}
\def\vz{{\vec{z}}}
\def\mA{{\mat{A}}}
\def\mW{{\mat{W}}}
\def\mX{{\mat{X}}}
\def\sA{{\set{A}}}
\def\sC{{\set{C}}}
\def\sF{{\set{F}}}
\def\sG{{\set{G}}}
\def\sN{{\set{N}}}
\def\sS{{\set{S}}}
\def\sW{{\set{W}}}
\def\sX{{\set{X}}}
\def\sY{{\set{Y}}}
\newtheorem{theorem}{Theorem}
\newtheorem{lemma}{Lemma}
\newtheorem{corollary}{Corollary}
\newtheorem{definition}{Definition}
\def\T{\mathsf{T}}
\def\R{\mathbb R}
\def\E{\mathbb E}
\def\P{\mathbb P}
\def\N{\mathbb N}
\def\ord{{\mcl O}}
\def\Bern{\mathrm{Bern}}
\def\Var{\mathrm{Var}}
\def\Cov{\mathrm{Cov}}
\def\ball{\set{B}}
\def\1{\vec{1}}
\newcommand{\mbf}{\mathbf}
\newcommand{\mcl}{\mathcal}
\newcommand{\mrm}{\mathrm}
\newcommand*{\inner}[2]{\left\langle#1,#2\right\rangle}
\newcommand*{\norm}[1]{\left\|#1\right\|}
\newcommand*{\parents}[1]{\left(#1\right)}
\newcommand*{\braces}[1]{\left\{#1\right\}}
\newcommand*{\brackets}[1]{\left[#1\right]}
\newcommand*{\card}[1]{\left|#1\right|}
\newcommand*{\abs}[1]{\left|#1\right|}
\renewcommand\epsilon{\varepsilon}
\renewcommand\hat{\widehat}
\renewcommand\tilde{\widetilde}
\renewcommand\bar{\overline}
\DeclareMathOperator*{\argmax}{argmax}
\def\margin{\ell}
\def\datadist{\mathcal D}
\def\deq{:=}
\newcommand*{\ind}[1]{\mathbb{1}\left(#1\right)}
\title{Adversarial Risk Bounds for Neural Networks through Sparsity based Compression}
\author{%
  Emilio Rafael ~Balda, Arash ~Behboodi, Niklas Koep, Rudolf ~Mathar
  \\
  Institute for Theoretical Information Technology\\
  RWTH Aachen University\\
  % Aachen, 52074 \\
  \texttt{\{balda, behboodi, koep, mathar\}@ti.rwth-aachen.de} \\
}
\begin{document}

\maketitle

\begin{abstract}
Neural networks have been shown to be vulnerable against minor adversarial perturbations of their inputs, especially for high dimensional data under $\ell_\infty$ attacks.
To combat this problem, techniques like adversarial training have been employed to obtain models which are robust on the training set.
However, the robustness of such models against adversarial perturbations may not generalize to unseen data.
To study how robustness generalizes, recent works assume that the inputs have bounded $\ell_2$-norm in order to bound the adversarial risk for $\ell_\infty$ attacks with no explicit dimension dependence.
In this work we focus on $\ell_\infty$ attacks on $\ell_\infty$ bounded inputs and prove margin-based bounds.
Specifically, we use a compression based approach that relies on efficiently compressing the set of tunable parameters without distorting the adversarial risk. 
To achieve this, we apply the concept of effective sparsity and effective joint sparsity on the weight matrices of neural networks.
This leads to bounds with no explicit dependence on the input dimension, neither on the number of classes.
Our results show that neural networks with approximately sparse weight matrices not only enjoy enhanced robustness, but also better generalization.
\end{abstract}

\section{Introduction}
In recent years, neural networks have been shown to be particularly vulnerable to maliciously designed perturbations of their inputs. 
Such perturbed inputs are known as adversarial examples and they are often only slightly distorted versions of the original inputs. 
For example, in image classification, adversarial examples have been shown to be indistinguishable from the original image to the human eye. 
This phenomena motivated several works aimed at understanding the nature of classifiers, and in particular neural networks, in the presence of adversarial examples. 

The first work discussing the vulnerability of neural networks to adversarial examples was presented by Goodfellow et al. \cite{harness_nnn}. 
In that work, the authors hypothesized that this phenomena can be explained by the excessive linearity of trained neural networks.
However, such claim was refuted by various subsequent works.
For instance, in \cite{TanayG16} it is shown that it is possible to train linear classifiers that are resistant to adversarial attacks which stands in contrast to the linearity hypothesis. 
Moreover, it is exemplified that high dimensional problems are not necessarily more sensitive to adversarial examples. 
Further, \cite{sabour2015adversarial} manipulated deep representations instead of the input and argued that the linearity hypothesis is not sufficient to explain this type of attack. 
In \cite{NIPS2016_6331}, the authors suggest that the flatness of the decision boundary is a reason for the existence of adversarial examples.  
In \cite{flexi}, the authors propose the low flexibility of neural networks, compared to the difficulty of the classification task, as a reason for the existence of adversarial examples. 
Another perspective is proposed in \cite{TanayG16} with the {boundary tilting} mechanism. It is argued that adversarial examples exist when the
decision boundary lies close to the sub-manifold of sampled data. 
The notion of {adversarial strength} is introduced which refers to the deviation angle between the target classifier and the nearest centroid classifier. 
It is shown that the adversarial strength can be arbitrarily increased independently of the classifier's accuracy by tilting the boundary.  
\cite{RozsaGB16b} give another explanation arguing that over the course of training the correctly classified samples do not have a significant impact on shaping the decision boundary and eventually remain close to it. This phenomenon is called {evolutionary stalling}.
In \cite{Rozsa2016AreAA}, the correlation between robustness and accuracy of the classifier is studied empirically by attacking different state-of-the-art neural networks. It is observed that higher accuracy neural networks are more sensitive to adversarial attacks than lower accuracy ones. 
Other works like \cite{moosavi-dezfooli2018robustness} study the curvature of the decision boundaries around training samples. 
They argue that neural networks are particularly vulnerable to universal perturbations in shared directions along which the decision boundary is systematically positively curved.
In \cite{scale, madry2018towards}, evidence was given that increasing model capacity alone can help to make neural networks more robust against adversarial attacks. Further, it was observed that robust models (obtained by adversarial training) exhibit rather sparse weights compared to non-robust ones.
While these approaches contributed to understanding the nature of adversarial examples, they do not consider whether the robustness of classifiers against adversarial perturbations generalizes to unseen data.

If a classifier is robust to perturbations of the training set, can we guarantee that it will be also robust to perturbations of the test set?
This question is not particularly new. 
The optimization community has studied this problem for quite some time. 
The work of Xu, et al. \cite{Xu2008RobustRA}, studied robust regression in Lasso, while  
later work \cite{xu2009robustness} obtained results for support vector machines. 
Other works considered the generalization properties of robust optimization in a distributional sense \cite{Sinha2018CertifyingSD}, that is when adversarial examples are assumed to be samples from the worst possible distribution within a Wasserstein ball around the original one. 
As discussed, these works provide algorithms for training various types of classifiers with robustness guarantees. 
Regarding neural networks, for the case where no adversarial perturbations are present, there exists an extensive literature on their generalization guarantees.
Many of these works are based on bounding the Rademacher complexity of the function class
\cite{Bartlett2017SpectrallynormalizedMB,Golowich2018SizeIndependentSC,Neyshabur2018TowardsUT,Li2018OnTG}, 
while others make use of the PAC-Bayes framework 
\cite{neyshabur2017pac,Neyshabur2017ExploringGI,2018DeterministicPG}. 
There other works which rely in different techniques, for instance 
 \cite{Arora2018StrongerGB} relies on compressing the weights of neural networks.
Despite this knowledge, proving robustness guarantees for neural networks remained unstudied till recently. 
Initial works going into this direction studied neural networks in artificial scenarios. 
For instance, Attias et al. \cite{Attias2018ImprovedGB}  proved generalization bounds for the case when the adversary can modify a finite number of entries per input. 
Following this approach, Diochnos et al. \cite{Diochnos2018AdversarialRA} showed that the number of flipped bits required to fool almost all inputs is less than $\ord (\sqrt n)$, for the case when the input is binary and uniformly distributed. 
As similar subsequent result \cite{Mahloujifar2018CanAR} for binary inputs, proved the existence of {polynomial-time attacks} that find adversarial examples of Hamming distance $\mcl O (\sqrt n)$. 
Concurrently, the work of Schmidt et al. \cite{Schmidt2018AdversariallyRG} showed that the amount of data necessary to classify $n$-dimensional Gaussian data grows by a factor of $\sqrt n$ in the presence of an adversary. 
However, Cullina et al. \cite{Cullina2018PAClearningIT} showed that the VC-dimension of linear classifiers does not increase in the adversarial setting. Additionally, they derived generalization guarantees for binary linear classifiers.
Moreover, Montasser et. al. \cite{Montasser2019VCCA} showed that VC-classes are learnable in the adversarial setting, but only if we refrain from using standard empirical risk minimization approaches. 
Later works considered more general scenarios. 
Using a PAC-Bayes approach, Farnia et al. \cite{Farnia2019GeneralizableAT} proved a generalization bound for neural networks under $\ell_2$ attacks.
However, deriving bounds for attacks with bounded $\ell_\infty$-norm (instead of $\ell_2$-norm) is of particular interest, since most successful attacks in computer vision are of this type. 
In addition, such attacks tend to be more effective for scenarios where the input dimension is large, thus deriving generalization bounds without explicit dimension dependence is promising. 
Some recent have advanced into . Since those works are closely related to this paper, we discussed them more in detail in the following section. 
\subsection{Related Work} % (fold)
\label{subsec:related_work}
The following works address the problem of proving generalization bounds for neural networks in the adversarial setting, where the attacker has bounded $\ell_\infty$ perturbations.
\begin{itemize}
	\item Yin et al. \cite{Yin2018RademacherCF} bounded the Rademacher complexity for linear classifiers and neural networks in the adversarial setting. This lead to explicit bounds on the notion of \textit{adversarial risk} for the linear classifier, as well as neural networks. 
	Nevertheless, such bound applied only to neural networks with one hidden layer and relu activations.
	\item Concurrent work from Khim et al. \cite{Khim2018AdversarialRB} proved bounds on a surrogate of the  adversarial risk. In that work, the authors use the so-called \textit{tree transform} on the function class to derive their results. Under the assumption that the original inputs have $\ell_2$ bounded norm, the authors prove generalization bounds with no explicit dimension dependence in the binary classification setting. However, the authors extend this to $K$-class classification by incurring an additional factor $K$ on their bound.
	\item Later work from Tu et al. \cite{Tu2018TheoreticalAO} formulated generalization in the adversarial setting as a   minimax problem. Their proposed framework is more general than previous ones in the sense that it can be applied to support vector machines and principal component analysis, as well as neural networks. However, for neural networks this approach yielded a generalization bound with explicit dimension dependence. 
\end{itemize}
One common assumption shared by these works is that the inputs come from a distribution with bounded $\ell_2$-norm, which is a weaker notion than assuming $\ell_\infty$ bounded inputs. 
% 
% 
% 

% \begin{table}[h]
% 	\begin{tabular}{c|c|c}
% 		Authors & Data Assumption & Sample Complexity \\
% 		\hline
% 		\cite{Khim2018AdversarialRB} & $\norm{\vx}_2 \leq R$ & $\tilde\ord\parents{d K^2 \prod_{j=2}^d \norm{\mW^j}_{1,\infty}^2 (R \max_{j} \norm{\mW^j}_\F + \epsilon \norm{\mW^1}_{1,\infty})^2 }$ \\
% 		% \hline
% 		\cite{Tu2018TheoreticalAO} & $\norm{\vx}_2 \leq R$ & $\tilde\ord\parents{\frac 1 \gamma R n \prod_{j=1}^d \norm{\mW^j}_2^2 \parents{\sum_{j=1}^d \sqrt{\frac{\norm{\mW^j}_\F}{\norm{\mW^j}_2}} }^2}$  \\
% 		% \hline
% 		Here & $\norm{\vx}_\infty \leq 1$ &
% 		$\tilde\ord\parents{d \parents{\frac{ 1 + \gamma -  \epsilon }{ \gamma - 2 \epsilon }}^2 \parents{ \sum_{j=1}^d \sqrt{ 
% 		\frac{\norm{\mW^j}_{1/2,\infty} \norm{\mW^j}_{1,1}}{\norm{\mW^j}_{1,\infty}^2} 
% 		} }^2}$
% 		\\
% 	\end{tabular}
% \end{table}

% subsection  (end)

% section related_work (end)

\subsection{Our Contributions}

In this work, we study the problem of bounding the generalization error of multi-layer neural networks under $\ell_\infty$ attacks, where we assume that the original inputs have $\ell_\infty$ bounded norm. 
Using a compression approach, we obtain bounds with no explicit dependence on the input dimension or the number of classes. 
We summarize our contributions as follows.
\begin{itemize}
	\item We prove generalization bounds in the presence of adversarial perturbations of bounded $\ell_\infty$-norm under the assumption that the input distribution has bounded $\ell_\infty$-norm as well. This is an improvement with respect to recent works where the input is assumed to be $\ell_2$ bounded. 
	\item We extended the compression approach from \cite{Arora2018StrongerGB} by incorporating the notion of effective sparsity. Using this technique we
	prove that the capacity of neural networks, under adversarial perturbations, is bounded by the effective sparsity and effective joint sparsity of its weight matrices. 
	This result has no explicit dimension dependence, neither it depends on the number of classes. We show that approximately sparse weights not only improve robustness against $\ell_\infty$ bounded adversarial perturbations, but they provide better generalization as well. 
	\item 
	% As in \cite{Arora2018StrongerGB}, we show experiments where this bound on the capacity is  less than the VC-dimension of the neural network. Moreover, 
	We corroborate our result with a small experiment on the MNIST dataset, where the bound correlates with adversarial risk. 
	We observe that adversarial training significantly decreases the bound, while standard training does not. 
	Similarly, adversarial training seems to decrease both, effective sparsity and effective joint sparsity, as predicted by our result. 
\end{itemize}

\subsection{Notation} 
We introduce first the notation used in this chapter and some of the basic definitions needed throughout this chapter. The letters $\vec{x},\vec{y},\dots$ are used for vectors, $\mat{A},\mat{B},\dots$ for matrices and  $\set{X},\set{Y},\dots$ for sets. 
We denote the set $\{1,\dots,n\}$ by $[n]$ for $n\in\N$. 
For any vector $\vec{x}=(x_1,\dots,x_n)^{\T}\in\R^n$ and $p > 0$, the $\ell_p$-norm of $\vec{x}$ is denoted by $\norm{\vx}_p$. The notation $\ball_{p,\epsilon}^n$ is used to refer to an $n$-dimensional $\ell_p$ ball of size $\epsilon$, that is the set $\ball_{p, \epsilon}^{n} = \{\vx \in \R^n: \|\vx\|_p \leq \epsilon\}$. 
% 
% For any matrix $\mA \in \R^{n_1 \times n_2}$ and $p>0$, its Shatten $p$-norm is denoted by $\norm{\mA}_p$ and its equal to the $\ell_p$-norm of the singular value vector of singular values of $\mA$. 
% 
For any matrix $\mA = (\va_1, \dots, \va_{n_2}) \in \R^{n_1 \times n_2}$ and $p,q>0$, its operator $p$-norm is denoted by $\norm{\mA}_p$ and its mixed $(p,q)$-norm by $\norm{\mA}_{p,q}$. These norms are given by 
\[
\norm{\mA}_p = \sup_{\norm{\vx}_p \leq 1} \norm{\mA \vx}_p \, 
\quad \text{and} \quad \,
\norm{\mA}_{p,q} = \norm{ \parents{ \norm{\va_1}_p , \dots, \norm{\va_{n_2}}_p }^\T }_q \, . 
\]
Finally, we use the compact notation $\tilde \ord (n) \deq \ord (n \log n) $ to ignore logarithmic factors.

\section{Problem Setup} % (fold)
\label{sec:definitions}

We start with the standard margin-based statistical learning framework. 
Let $\sX$ be the feature space, $\sY$ the label space, and $\datadist: \sX \times \sY \rightarrow [0,1]$ a probability measure. 
In this work, it is assumed that all instances $\vx \in \sX$ have $\ell_\infty$-norm bounded by $1$, that is $\sX \subseteq \ball_{\infty, 1}^{n} \subset \R^n$. 
Without loss of generality, let the label space be $\sY = \{1, 2, \dots, \card{\sY}\}$.
Using these notions, a classifier is defined through its so called {score function} $f: \R^n \rightarrow \R^{\card{\sY}}$ such that the predicted label is $\argmax_{j \in \sY} [f(\cdot)]_j$. 
Moreover, given an instance $(\vx, y) \in \sX \times \sY$, the {classification margin} is defined as 
\[
\margin (f; \vx, y) = [f (\vx)]_y - \max_{j\neq y} [f (\vx)]_j \, .
\]
In this manner, a positive margin implies correct classification.
Then, for any distribution $\datadist$ the expected margin loss with margin $\gamma \geq 0$ is defined as
$
L_{\gamma}(f) = \P_{(\vx, y) \sim \datadist} \left[ \margin (f; \vx, y)  \leq \gamma \right] \, .
$
% $\widehat L_{\gamma}(f)$ denotes an empirical estimate of $L_{\gamma}(f)$.
% Note that $L_{0}(f)$ and $\widehat L_{0}(f)$ are the expected risk and training error.

In this paper, we study the case where an adversary is present. This adversary has access to the input $\vx$ and is allowed to add a perturbation $\veta$ with $\ell_\infty$-norm bounded by some $\epsilon \geq 0$ (i.e., $\veta \in \ball_{p,\epsilon}^n$) such that the classification margin is as small as possible. 
This perturbed input $\vx + \veta$ is usually known as an {adversarial example}. 
Furthermore, let us define the margin under adversarial perturbations as 
\[
\margin_\epsilon (f; \vx, y) = \inf_{\veta \in \ball^n_{\infty, \epsilon}} \margin(f; \vx + \veta, y) \, .
\] 
This leads to the definition of {adversarial margin loss}: 
\[
L_{\gamma}^{\epsilon}(f) = \P_{(\vx, y) \sim \datadist} \left[ \margin_\epsilon (f; \vx, y) \leq \gamma \right] \, .
\]
Let $\sS =\{(\vx_1, y_1), \dots, (\vx_m, y_m)\}$ be the training set composed of $m$ instances drawn independently from $\datadist$. 
Using these instances we define $\widehat L_{\gamma}^{\epsilon}(f) = \frac 1m \sum_{i=1}^m \ind{ \margin_\epsilon (f; \vx_i, y_i) \leq \gamma}$ as the empirical estimate of $L_{\gamma}^{\epsilon}(f)$, where $\ind{\cdot}$ denotes the indicator function. 
Note that $L_{0}^{\epsilon}(f)$ and $\widehat L_{0}^{\epsilon}(f)$ are the expected risk and training error under adversarial perturbations, respectively.

For many classifiers, such as deep neural networks, the score function $f$ belongs to a complicated function class $\sF$, which usually has more sample complexity than the size of the training set.  
Even without the presence of an adversary, it is challenging to bound the {generalization error}, given by the difference $L_{0}(f) - \hat L_{\gamma}(f)$, of such function classes. 
The key idea behind the compression framework presented in \cite{Arora2018StrongerGB} is to show that there exists a finite function class $\sG$ with low sample complexity and a mapping that assigns a function $g\in \sG$ to every $f\in \sF$ such that the empirical loss is not severely degraded. 
This trick allows us to bound the generalization error using the sample complexity of $\sG$ instead of $\sF$. 
A drawback of this method is that we are only able to bound $L_{0}(g) - \hat L_{\gamma}(f)$ instead of the true generalization error original. 
Nevertheless, as the authors mentioned in \cite{Arora2018StrongerGB}, a similar issue is present as well in standard PAC-Bayes bounds, where the bound is on a noisy version of $f$. 
Moreover, the authors discuss some possible ways to solve this issue, but these approaches were left for future work.  
In this paper we leverage such a compression framework by extending it to the case when an adversary is present. 
Our goal is to bound the generalization error under the presence of an adversary.  
We start by introducing some formal definitions and theorems, similar to the ones in \cite{Arora2018StrongerGB}. 
All proofs are deferred to the supplementary material.

\begin{definition}[$(\gamma, \epsilon, \sS)$-compressible]
Given a set of parameter configurations $\sA$, let $\sG_\sA = \{g_A | A \in \sA\}$ be a set of parametrized functions $g_A$. We say that the score function $f \in \sF$ is $(\gamma, \epsilon, \sS)$-compressible through $\sG_\sA$ if
\[
\forall \vx \in \sS, y \in \sY : |\margin_\epsilon(f; \vx, y) - \margin_\epsilon(g_A; \vx, y)| \leq \gamma .
\]  
\end{definition}

\begin{theorem}\label{thm:bound-main}
Given the finite sets $\sA$ and $\sG_\sA = \{g_A | A \in \sA\}$, if $f$ is $(\gamma, \epsilon, \sS)$-compressible via $\sG_\sA$  then there exists $A \in \sA$ such that with high probability
\[
L_0^\epsilon(g_A) \leq \widehat L_\gamma^\epsilon(f) + \ord\left(\sqrt{\frac{\log \card{\sA}}{m}} \right) \, .
\]
\end{theorem}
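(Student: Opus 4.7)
The plan is to combine the data-independent finiteness of $\sA$ with a standard uniform concentration argument. The key observation is that although the particular element of $\sA$ realizing the compression for $f$ depends on the training set, the pool $\sA$ from which it is drawn is fixed and finite, so a union bound absorbs the data dependence.

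First I would fix any $A \in \sA$ and apply Hoeffding's inequality to the $m$ i.i.d.\ Bernoulli variables $Z_i = \ind{\margin_\epsilon(g_A;\vx_i,y_i)\le 0}$, whose common mean is $L_0^\epsilon(g_A)$ and whose empirical mean is $\widehat L_0^\epsilon(g_A)$. A union bound over $\sA$ then gives that, with probability at least $1-\delta$, for every $A\in\sA$ simultaneously,
\[
L_0^\epsilon(g_A) \le \widehat L_0^\epsilon(g_A) + \sqrt{\tfrac{\log(\card{\sA}/\delta)}{2m}},
\]
which is already the $\ord(\sqrt{\log\card{\sA}/m})$ term in the statement.

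Next I would invoke the $(\gamma,\epsilon,\sS)$-compressibility hypothesis: there is a (sample-dependent) $A^\star \in \sA$ with $|\margin_\epsilon(f;\vx_i,y_i) - \margin_\epsilon(g_{A^\star};\vx_i,y_i)| \le \gamma$ for every $i\in[m]$. In particular, whenever $\margin_\epsilon(g_{A^\star};\vx_i,y_i)\le 0$ we also have $\margin_\epsilon(f;\vx_i,y_i)\le \gamma$, so $\widehat L_0^\epsilon(g_{A^\star}) \le \widehat L_\gamma^\epsilon(f)$. Evaluating the uniform concentration bound at $A = A^\star$ and chaining the two inequalities delivers the theorem.

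The only subtle step is the data dependence of $A^\star$: a naive application of Hoeffding to $g_{A^\star}$ alone would be invalid, since $A^\star$ is not independent of $\sS$. Passing through the uniform bound over the entire finite class $\sA$ is what legitimizes the argument, and it is precisely what produces the $\log\card{\sA}$ factor in the final rate. The remaining step — relating $\widehat L_0^\epsilon(g_{A^\star})$ to $\widehat L_\gamma^\epsilon(f)$ — is an immediate consequence of the compressibility definition and a one-line margin comparison, with no further probabilistic ingredient needed.
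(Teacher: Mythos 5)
Your proposal is correct and follows essentially the same route as the paper's proof: Hoeffding's inequality for each fixed $g_A$, a union bound over the finite set $\sA$, and the compressibility hypothesis to deduce $\widehat L_0^\epsilon(g_{A^\star}) \le \widehat L_\gamma^\epsilon(f)$. Your explicit remark about the data dependence of $A^\star$ being absorbed by the uniform bound is a point the paper leaves implicit, but it does not change the argument.
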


\begin{corollary}\label{cor:bound-main}
  In the same setting of Theorem \ref{thm:bound-main}, if $f$ is compressible only for a fraction $1 - \delta$ of the training sample, then with high probability
  \[
  L_0^\epsilon(g_A) \leq \widehat L_\gamma^\epsilon(f) +  \ord\left(\sqrt{\frac{\log \card{\sA}}{m}} \right) + \delta\, .
  \]
\end{corollary}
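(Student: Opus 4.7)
The plan is to run the same argument that proves Theorem~\ref{thm:bound-main} on the empirical risk of $g_A$, and then account separately for the small fraction of samples on which the compression guarantee fails. Write $\sS_c \subseteq \sS$ for the subset of training points on which the $(\gamma,\epsilon)$-compression inequality $|\margin_\epsilon(f;\vx,y)-\margin_\epsilon(g_A;\vx,y)|\le \gamma$ actually holds, so by assumption $|\sS_c|\ge (1-\delta)m$, and the remaining at most $\delta m$ points are ``bad.''

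The first step is the generalization half of the argument: by the standard union-bound/Hoeffding step used in the proof of Theorem~\ref{thm:bound-main}, for every $A\in\sA$ we have, with high probability,
\[
L_0^\epsilon(g_A) \le \widehat L_0^\epsilon(g_A) + \ord\!\left(\sqrt{\tfrac{\log\card{\sA}}{m}}\right),
\]
since this step only relies on $\sG_\sA$ being a fixed finite class and makes no reference to $f$ or to the compression guarantee. Hence it is unaffected by relaxing compressibility to a $(1-\delta)$-fraction.

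Next, decompose the empirical adversarial $0$-margin risk of $g_A$ over $\sS$ as
\[
\widehat L_0^\epsilon(g_A) \;=\; \tfrac{1}{m}\!\!\sum_{(\vx,y)\in\sS_c}\!\!\ind{\margin_\epsilon(g_A;\vx,y)\le 0}
\;+\; \tfrac{1}{m}\!\!\sum_{(\vx,y)\in\sS\setminus\sS_c}\!\!\ind{\margin_\epsilon(g_A;\vx,y)\le 0}.
\]
For points in $\sS_c$ the compression inequality implies that $\margin_\epsilon(g_A;\vx,y)\le 0$ forces $\margin_\epsilon(f;\vx,y)\le \gamma$, so the first sum is bounded by $m\widehat L_\gamma^\epsilon(f)$. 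The second sum is trivially bounded by $|\sS\setminus\sS_c|\le \delta m$. Dividing by $m$ gives $\widehat L_0^\epsilon(g_A) \le \widehat L_\gamma^\epsilon(f) + \delta$.

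Combining the two displays yields the claimed inequality. The only mild subtlety (not really an obstacle) is that one must apply the union-bound step to a single $A\in\sA$ whose existence is guaranteed by the (partial) compression property, and this is exactly the same quantifier structure used in Theorem~\ref{thm:bound-main}; the indicator inequalities used to pass from $g_A$ to $f$ are pointwise, so handling the $\delta$-fraction of failed points costs only an additive $\delta$ term, with no impact on the $\sqrt{\log\card{\sA}/m}$ concentration factor.
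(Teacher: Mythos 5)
Your argument is correct and is exactly the intended one: the paper omits a proof of this corollary (it is inherited from the compression framework of Arora et al.), and the standard argument is precisely your decomposition of $\widehat L_0^\epsilon(g_A)$ over the compressible and non-compressible training points, with the Hoeffding/union-bound step of Theorem~\ref{thm:bound-main} untouched. No gaps.
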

This main definition and following theorems are trivial extensions of the ones used in \cite{Arora2018StrongerGB} to the adversarial setting. 
However, even for the linear classifier, the main technique used in that work for compressing $f$ cannot be applied to the setup of this paper without incurring into explicit dimensionality dependencies in the resulting bounds. 
This will be explained in detail in the next section.

% 
% section definitions (end)
% 
\section{Main Results}
In this section we introduce our main results. We start with linear classifiers on binary classification and move forward to neural networks and multi-class classification. 
\subsection{Linear Classifier} % (fold)
\label{subsec:linear}
We start with a linear classifier for binary labels. 
Assume that $\vx \in \ball_{\infty,1}^n$, $y \in \{1,2\}$ and let $\vw  = (w_1,\dots, w_n)^\T$ be a vector of weights of a linear classifier. 
Then the score function of the linear classifier is given by
\[
	f_\vw(\vx) = \begin{pmatrix} 0 \\ \inner{\vw}{\vx} \end{pmatrix} \, .
\]
This simplifies the margin to $\margin(f; \vx, y) = (2y -3) \inner{\vw}{\vx}$, which leads to 
\[
	\margin_\epsilon(f_\vw; \vx, y) = (2y -3) (\inner{\vw}{\vx} - \epsilon\norm{\vw}_1) \, .
\]
Note that $(2y -3) \in \{-1,+1\}$. 
The weight vector $\vw \in \R^n$ of this classifier, with margin $\gamma$, can be compressed into another $\hat \vw$ such that both classifiers make the same predictions with reasonable probability (see supplementary material). 
Given $\delta \in (0,1]$, the compressed classifier $\hat \vw$ is constructed entry-wise as $\hat w_i = z_i w_i /p_i$ where $p_i = (1+\epsilon)^2 \abs{w_i} / \delta \gamma^2$ and $z_i \sim \Bern\parents{p_i}$. 
Such classifier $\hat \vw$ outputs the same prediction as $\vw$ with probability $1- \delta$ and has only $\ord \parents{(\log n) (1+\epsilon^2)/\delta \gamma^2}$ non-zero entries with high probability. 
By discretizing $\hat \vw$ we obtain a compression setup that maps $\vw$ into a discrete set but fails with probability $\delta$. 
Therefore, we can apply Corollary \ref{cor:bound-main} and choose $\delta = \parents{(1+\epsilon)^2 / \gamma^2 m}^{1/3}$, which yields a generalization bound of order $\tilde \ord\parents{((1+\epsilon)^2 / \gamma^2 m)^{1/3}}$ (see the supplementary material for more details). 
This approach is fairly similar to the original one in the work of Arora et al. \cite{Arora2018StrongerGB}, but the $p_i$ values are chosen differently in order to deal with the new term $\epsilon\norm{\vw}_1$ that appears in the margin's expression. 
The result of the previous section provides a dimension-free bound\footnote{Except for logarithmic terms.}. 
However, that bound scales with $m^{1/3}$ instead of $\sqrt{m}$ since the compression approach fails with probability $\delta$. 
To tackle this issue, Arora et al. \cite{Arora2018StrongerGB} proposed a compression algorithm based on random projections. 
In their setup, this technique works due to a famous corollary of Johnson-Lindenstrauss lemma that shows that we can construct random projections which preserve the inner $\inner{\vw}{\vx}$. 
In addition, since the Euclidean inner product can be induced by the $\ell_2$-norm, the $\ell_2$-norm of $\vw$ is preserved as well. 
However, in this setup we would need a random projection that preserves $\norm{\vw}_1$ and $\inner{\vw}{\vx}$ at the same time, which seems unattainable unless additional assumptions are made.
We therefore propose to assume an effective sparsity bound on $\vw$, which is defined as follows.
\begin{definition}[Effective $\bar s$-sparsity]
  A vector $\vw \in \R^n$ is effectively $\bar s$-sparse, 
  with $\bar s \in [1,n]$, if
  \[
  \norm{\vw}_{1/2} \leq \bar s  \norm{\vw}_1 \, .
  \]
\end{definition}
Note that all $s$-sparse vectors are effectively $s$-sparse as well, but not vice-versa. Assuming that $\vw$ is effectively sparse allows us to compress it by simply setting its lowest entries to zero. The following lemma provides a tight bound on the error, in the $\ell_1$ sense, that is caused by this process. 
\begin{lemma}[\cite{rauhut}: Theorem 2.5]\label{lmm:sparse-approx-error}
  % For any $p>q>0$ and $\vec x \in \R^n$,
  % \[
  % \inf\braces{\norm{\vec x - \vec z}_p : \vec z \text{ is $s$-sparse}} \leq \frac{1}{s^{1/q - 1/p}} \norm{\vec x}_q \, .
  % \]
  For any $\vec w \in \R^n$ the following inequalities hold:
  \begin{align*}
  	\inf\braces{\norm{\vec w - \vec z}_1 : \vec z \text{ is $s$-sparse}} &\leq \frac{1}{4s} \norm{\vec w}_{1/2} \, , \\
  	\inf\braces{\norm{\vec w - \vec z}_\infty : \vec z \text{ is $s$-sparse}} &\leq \frac{1}{s} \norm{\vec w}_{1} \, .
  \end{align*}
  
In both cases the infimum is attained when $\vz$ is an $s$-sparse vector whose non-zero entries are the $s$-largest absolute entries of $\vec w$. 
\end{lemma}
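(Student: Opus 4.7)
The plan is to reduce the variational problem to a sorting/rearrangement question and then establish the two resulting scalar inequalities.

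\medskip

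\noindent\textbf{Reduction to best $s$-term approximation.} First I would fix a support set $T \subset [n]$ with $|T| = s$ and observe that for any $\vec z$ supported on $T$, the error vector $\vec w - \vec z$ equals $w_j$ on $T^c$ (independent of $\vec z$) and equals $w_j - z_j$ on $T$. For both the $\ell_1$ and $\ell_\infty$ norms, the total error is minimized by choosing $z_j = w_j$ for $j \in T$, which kills the $T$-contribution entirely. Hence the infimum over $\vec z$ with support in $T$ equals $\sum_{j \notin T} |w_j|$ for $\ell_1$ and $\max_{j \notin T} |w_j|$ for $\ell_\infty$. A routine rearrangement argument then identifies the minimizing $T$ of size $s$ as the index set of the $s$ largest entries of $|\vec w|$, which simultaneously proves the ``infimum is attained'' part of the statement.

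\medskip

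\noindent\textbf{The two scalar inequalities.} Let $v_1 \geq v_2 \geq \cdots \geq v_n \geq 0$ denote the decreasing rearrangement of $(|w_i|)_{i \in [n]}$. After the reduction, the two claims become
\[
v_{s+1} \leq \tfrac{1}{s}\,\textstyle\sum_{j=1}^n v_j \qquad \text{and} \qquad \sum_{j > s} v_j \leq \tfrac{1}{4s}\bigl(\sum_{j=1}^n \sqrt{v_j}\bigr)^2.
\]
The $\ell_\infty$ bound follows at once from $(s+1)\,v_{s+1} \leq \sum_{j=1}^{s+1} v_j \leq \|\vec w\|_1$. For the $\ell_1$ bound I would substitute $u_j = \sqrt{v_j}$, so the $u_j$ remain nonincreasing with $\sum_j u_j = \|\vec w\|_{1/2}^{1/2}$, and then chain
\[
\sum_{j > s} u_j^2 \;\leq\; u_s \sum_{j > s} u_j \;\leq\; \tfrac{1}{s}\Bigl(\sum_{j=1}^s u_j\Bigr)\Bigl(\sum_{j > s} u_j\Bigr) \;\leq\; \tfrac{1}{4s}\Bigl(\sum_{j=1}^n u_j\Bigr)^2,
\]
using (i) $u_j \leq u_s$ for $j > s$, (ii) $s u_s \leq \sum_{j=1}^s u_j$ by monotonicity, and (iii) the AM--GM inequality $4ab \leq (a+b)^2$. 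Converting back via $u_j^2 = v_j$ yields exactly the claimed bound.

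\medskip

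\noindent\textbf{Anticipated obstacle.} The only subtle point is securing the sharp constant $1/(4s)$. A naive Stechkin-style route, using $j u_j \leq \sum_{i=1}^j u_i \leq \|\vec w\|_{1/2}^{1/2}$ to conclude $v_j \leq \|\vec w\|_{1/2}/j^2$ and then $\sum_{j>s} 1/j^2 \leq 1/s$ via telescoping with $1/[j(j-1)]$, only delivers the weaker factor $1/s$. Recovering the extra factor of $4$ forces one to split $\sum_j u_j$ into head and tail sums and apply AM--GM to their product; this is the only genuinely nontrivial step, and the rest of the argument is bookkeeping.
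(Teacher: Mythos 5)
Your proof is correct. Note that the paper itself gives no proof of this lemma: it is imported verbatim from Foucart--Rauhut (the cited Theorem 2.5), so there is nothing internal to compare against. Your argument is the standard one behind that theorem, specialized to the pairs $(p,q)=(1/2,1)$ and $(1,\infty)$: the reduction to the nonincreasing rearrangement, the bound $(s+1)v_{s+1}\le \|\vec w\|_1$ for the $\ell_\infty$ case, and the head--tail split $\sum_{j>s}u_j^2 \le u_s\sum_{j>s}u_j \le \tfrac1s(\sum_{j\le s}u_j)(\sum_{j>s}u_j)\le \tfrac{1}{4s}(\sum_j u_j)^2$ with AM--GM recovering the constant $1/4$ all check out, given the convention $\|\vec w\|_{1/2}=(\sum_i\sqrt{|w_i|})^2$ that the paper implicitly uses. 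You are right that the naive Stechkin-style estimate only yields $1/s$; the product-of-partial-sums step is exactly how the sharp constant is obtained in the reference as well.
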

For any effectively $\bar s$-sparse classifier $\vw$ with margin $\gamma$, this lemma allows us to compress it into a vector $\hat \vw$, with only $\ord(\bar s (1+\epsilon)/\gamma)$ non-zero entries, such that the both classifiers assign the same label to any input. 
Since this compression approach does not fail, we can discretize $\hat \vw$ and apply Theorem \ref{thm:bound-main}.
This allows to prove the following generalization bound for the linear classifier in the presence of an adversary. 
\begin{theorem}\label{thm:linear-sparse-bound}
Let $\vw$ be any linear classifier with $\norm{\vw}_{1/2} / \norm{\vw}_{1} \leq \bar s$, and margin $\gamma >0$ on the training set $\sS$. Then, if $\card{\sS} = m$, with high probability the adversarial risk is bounded by
  \[
  L_0^\epsilon(f_{\hat \vw}) \leq \widehat L_\gamma^\epsilon(f_{\vw}) + 
  \tilde\ord\left( \sqrt{\frac{(1+\epsilon) \bar s}{\gamma m}} \right)  \, ,
  \]
  where $\tilde \ord(\cdot)$ ignores logarithmic factors.
\end{theorem}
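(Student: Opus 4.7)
The plan is to construct a finite set $\sA$ of discretized weight vectors such that $\vw$ is $(\gamma, \epsilon, \sS)$-compressible through the parametric family $\{f_{\hat\vw} : \hat \vw \in \sA\}$, and then invoke Theorem~\ref{thm:bound-main}. The first step is a sensitivity estimate: for any two weight vectors $\vw, \hat\vw\in\R^n$ and any $(\vx,y)\in \sX\times\{1,2\}$,
\[
\margin_\epsilon(f_\vw;\vx,y) - \margin_\epsilon(f_{\hat\vw};\vx,y) = (2y-3)\inner{\vw-\hat\vw}{\vx} - \epsilon\parents{\norm{\vw}_1 - \norm{\hat\vw}_1}.
\]
Using $\norm{\vx}_\infty \le 1$, $(2y-3)\in\{-1,+1\}$, Hölder's inequality, and the reverse triangle inequality for $\norm{\cdot}_1$, the right-hand side is at most $(1+\epsilon)\norm{\vw-\hat\vw}_1$ in absolute value. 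Hence any $\ell_1$-approximation of $\vw$ with error $\le \gamma/(1+\epsilon)$ preserves adversarial margins uniformly up to $\gamma$, which is exactly what the compressibility definition demands.

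\textbf{Sparsify, then discretize.} Since the classifier is scale invariant, I would rescale so that $\norm{\vw}_1 = 1$ (the bound is naturally expressed in units of $\gamma/\norm{\vw}_1$). Let $\hat\vw^{(s)}$ denote the truncation of $\vw$ to its $s$ largest-in-magnitude entries. Lemma~\ref{lmm:sparse-approx-error} combined with the effective $\bar s$-sparsity hypothesis gives
\[
\norm{\vw - \hat\vw^{(s)}}_1 \le \frac{\norm{\vw}_{1/2}}{4s} \le \frac{\bar s}{4s}.
\]
Choosing $s = \ceil{(1+\epsilon)\bar s/(2\gamma)}$ bounds this contribution to the margin deviation by $\gamma/2$. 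Next, I would round each of the $s$ non-zero coordinates of $\hat\vw^{(s)}$ (which lie in $[-1,1]$ since $\norm{\hat\vw^{(s)}}_\infty\le\norm{\vw}_1=1$) to a uniform grid of resolution $\tau = \gamma/(2s(1+\epsilon))$. The resulting vector $\tilde\vw$ satisfies $\norm{\hat\vw^{(s)}-\tilde\vw}_1 \le s\tau$, adding at most another $\gamma/2$ to the margin deviation. The total adversarial-margin perturbation is therefore $\le \gamma$, so every effectively $\bar s$-sparse classifier is $(\gamma,\epsilon,\sS)$-compressible through the discrete class $\sA$ of all such $\tilde\vw$.

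\textbf{Counting and conclusion.} The support of $\tilde\vw$ is one of $\binom{n}{s}$ index sets, and each of the $s$ non-zero coordinates takes one of $\ord(1/\tau) = \ord(s(1+\epsilon)/\gamma)$ grid values, so
\[
\log\card{\sA} \le s\log(en/s) + s\log\ord\!\parents{s(1+\epsilon)/\gamma} = \tilde\ord\parents{(1+\epsilon)\bar s/\gamma}.
\]
Plugging this estimate into Theorem~\ref{thm:bound-main} yields the claimed rate $\tilde\ord\parents{\sqrt{(1+\epsilon)\bar s/(\gamma m)}}$. The only delicate point is the split of the $\gamma$ budget between the sparsification and the discretization phases: if too much slack is spent on either one, then either $s$ grows or the grid inflates, and the ambient dimension $n$ re-enters polynomially rather than logarithmically. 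Balancing the two equally, as above, is what keeps the bound dimension-free up to the logarithmic $\binom{n}{s}$ factor absorbed in $\tilde\ord(\cdot)$.
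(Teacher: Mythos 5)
Your proposal is correct and follows essentially the same route as the paper's proof: the paper likewise bounds the adversarial-margin deviation by $(1+\epsilon)\norm{\vw-\hat\vw}_1$, truncates to the $s$ largest entries with $s = \bar s(1+\epsilon)/2\gamma$ via Lemma~\ref{lmm:sparse-approx-error}, rounds onto a grid of spacing $\gamma/s(1+\epsilon)$, splits the $\gamma$ budget evenly between the two steps, and applies Theorem~\ref{thm:bound-main}. Your counting is in fact slightly more careful than the paper's, since you explicitly include the $\binom{n}{s}$ support-selection factor that the paper absorbs silently into $\tilde\ord(\cdot)$.
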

This result provides a bound with no explicit dimension dependence. 
Moreover, we observe that the presence of an adversary only increases the sample complexity by a factor $(1+\epsilon)$.  
% 
% subsection effectively_sparse_linear_classifier (end)

\subsection{Neural Networks} % (fold)
\label{subsec:neural_networks}

Due to the $\ell_\infty$-norm bound on the perturbation $\veta$, in this work the mixed $(1,\infty)$-norm of the weight matrices plays a central role. 
As an example let us consider a linear classifier in multi-class classification, that is $f(\vx) = \mW^\T \vx$. Then, a perturbation $\veta$ can perturb the score function at most
\[
\sup_{\norm{\veta}_\infty \leq 1} \norm{\mW^\T \veta}_\infty 
= \norm{\mW^\T}_\infty = \norm{\mW}_{1,\infty} \, . 
\]
The last equality comes from the properties of operator norms, see \cite{tropp2004topics} for more details.
Similar statements can be made for the layers of a neural network with $1$-Lipschitz activation functions. 
Let us start by defining a $d$-layered fully connected neural network as  
\begin{equation}\label{eq:nn-def}
\vx^i \deq \phi(\mW^{i^\T}\vx^{i-1}) \, , \quad \forall i=1,2,\dots,d \, ,
\end{equation}
where $\phi$ is a $1$-Lipschitz activation function applied entry-wise, $\vx^0 \deq \vx $ and $ f(\vx) \deq \vx^d$. 
Then, the following lemma allows us to quantify how much error is incurred by perturbing the input of a layer, or by switching the matrix $\mW$ to a different one. 
\begin{lemma}\label{lmm:lipschitz}
  If $\phi$ is a $1$-Lipschitz activation function, then for any $\mW, \hat \mW$ the following inequalities hold
  \begin{align*}
    \norm{\phi(\mW^\T \vx) - \phi(\mW^\T \parents{\vx + \veta})}_\infty &\leq \norm{\mW}_{1,\infty} \norm{\veta}_\infty\, , \\
    \norm{\phi(\mW^\T \vx) - \phi(\hat\mW^\T \vx)}_\infty & \leq \norm{\mW - \hat{\mW}}_{1,\infty} \norm{\vx}_\infty \, .
  \end{align*}
\end{lemma}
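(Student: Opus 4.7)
}

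The plan is to reduce both inequalities to bounds of the form $\norm{\mM^\T \vv}_\infty \leq \norm{\mM}_{1,\infty} \norm{\vv}_\infty$, which follows from H\"older's inequality applied column-wise. The only ingredient specific to neural networks is the $1$-Lipschitz activation, which is handled at the very beginning.

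First, I would observe that since $\phi$ is $1$-Lipschitz and applied entry-wise, for any $\vu, \vv \in \R^k$ we have $\norm{\phi(\vu) - \phi(\vv)}_\infty = \max_i \abs{\phi(u_i) - \phi(v_i)} \leq \max_i \abs{u_i - v_i} = \norm{\vu - \vv}_\infty$. Applying this to the two inequalities in the lemma statement reduces the task to showing
\[
\norm{\mW^\T \veta}_\infty \leq \norm{\mW}_{1,\infty} \norm{\veta}_\infty
\quad \text{and} \quad
\norm{(\mW - \hat\mW)^\T \vx}_\infty \leq \norm{\mW - \hat\mW}_{1,\infty} \norm{\vx}_\infty .
\]

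Next, I would write $\mW = (\vw_1, \dots, \vw_{n_2})$ in terms of its columns. The $j$-th entry of $\mW^\T \veta$ equals $\inner{\vw_j}{\veta}$, and by the standard H\"older inequality $\abs{\inner{\vw_j}{\veta}} \leq \norm{\vw_j}_1 \norm{\veta}_\infty$. Taking the maximum over $j$ gives
\[
\norm{\mW^\T \veta}_\infty = \max_j \abs{\inner{\vw_j}{\veta}} \leq \bigl(\max_j \norm{\vw_j}_1\bigr) \norm{\veta}_\infty = \norm{\mW}_{1,\infty} \norm{\veta}_\infty ,
\]
where the final equality is exactly the definition of the mixed $(1,\infty)$-norm given in the Notation section. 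The second inequality then follows by applying the same argument to the matrix $\mW - \hat\mW$ in place of $\mW$ and to the vector $\vx$ in place of $\veta$.

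There is essentially no obstacle here: the result is a direct combination of the entry-wise Lipschitz property of $\phi$ and H\"older's inequality, together with the identification of $\max_j \norm{\vw_j}_1$ with the $(1,\infty)$-norm. The only subtlety worth flagging in the write-up is that we use the $\ell_\infty$-norm on the \emph{output} of each layer (rather than $\ell_2$), which is what makes the $\norm{\cdot}_{1,\infty}$ norm — rather than the spectral norm — the natural layer-wise Lipschitz constant for the $\ell_\infty$-threat model studied in this paper.
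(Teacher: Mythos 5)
Your proof is correct and follows essentially the same route as the paper's: both arguments apply the $1$-Lipschitz property of $\phi$ entry-wise, bound each entry $\abs{\inner{\vw_j}{\veta}}$ by H\"older's inequality, and take the maximum over columns to recover $\norm{\mW}_{1,\infty}$. Your write-up is in fact slightly cleaner, since the paper's version leaves stray constants ($\epsilon$ and an undefined $B$) in its displayed chains that do not belong in a proof of the stated inequalities.
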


Following the steps of Section \ref{subsec:linear}, we now impose some conditions on $\mW$ that allow us to efficiently compress it into another matrix $\hat \mW$ which belongs to a potentially small set. To that end, let us start by introducing the notion of effective joint sparsity.
\begin{definition}[Effective joint sparsity]
  A matrix $\mW \in \R^{n_1 \times n_2}$ is effectively joint $\bar s$-sparse, 
  with $\bar s \in [1,n_2]$, if
  \[
  	\norm{\mW}_{1,1} \leq \bar s  \norm{\mW}_{1,\infty} \, .
  \]
\end{definition}
Any matrix with $s$ non-zero columns is effectively joint $\bar s$-sparse as well. 
Given a matrix $\mW = (\vw_1, \dots , \vw_n)$ is effectively joint $\bar s$-sparse if and only if effective joint sparsity can be seen as a type of effective sparsity condition on the vector $(\norm{\vw_1}_1 , \dots, \norm{\vw_{n}}_1 )^\T$. 
A consequence of Lemma \ref{lmm:sparse-approx-error} is that we can compress effectively joint-sparse matrices by setting to zero their columns with lowest $\ell_1$-norm. For example, assume that $\mW \in \R^{n_1 \times n_2}$ is an effective joint $\bar s$-sparse matrix and that $\hat \mW$ is constructed by setting to zero all columns of $\mW$ except for its $s$ largest in the $\ell_1$ sense. 
Then, by Lemma \ref{lmm:sparse-approx-error}, we can bound the $\norm{\cdot}_{1,\infty}$ error as 
\[
\norm{\mW - \hat \mW}_{1,\infty} \leq \frac{1}{s}\norm{\mW}_{1,1} 
\leq \frac{\bar s}{s}\norm{\mW}_{1,\infty} \, .
\]
The resulting compressed matrix $\hat \mW$ would have only $s$ non-zero columns instead of the original $n_2$. 
However, every column has potentially $n_1$ non-zero values. 
In order to compress $\mW$ further we assume that each one of its columns has bounded effective sparsity as well. 
In summary, effective joint sparsity allows us to reduce the number of non-zero columns in a matrix, while effective sparsity of the columns allows us to reduce the number of non-zero elements that each of the non-zero columns may have.  
Finally, discretization is handled using a standard covering number argument. 
Putting all together into the following compression algorithm (Algorithm \ref{alg:compress}) allows us to map $\mW$ into a discrete set while keeping the $\norm{\cdot}_{1,\infty}$ error bounded.
\begin{algorithm}[htb]
  \begin{algorithmic}
    \STATE \textbf{Require:} $\gamma > 0$ and $\mW \in \R^{ n_1 \times n_2}$ with $\norm{\mW}_{1,\infty}=1$, effectively $\bar s_1$-sparse columns and is effectively joint $\bar s_2$-sparse
    \STATE \textbf{Ensure:} 
    \[
    \norm{\mW - \hat \mW}_{1,\infty} \leq \gamma \, ,
    \]
    where $\hat \mW$ belongs to a discrete set $\sW$ such that $\log \card{\sW} \leq \tilde \ord \parents{\norm{\mW}_{1,\infty}^2 \bar s_1 \bar s_2 / \gamma^2 }$
    \STATE Choose $s_1 = 3\norm{\mW}_{1,\infty} \bar s_1 / 4\gamma $ and $s_2 = 3 \norm{\mW}_{1,\infty} \bar s_2 / \gamma $
    \STATE Let $\bar \mW \in \R^{ n_1 \times n_2}$ be obtained by setting to zero the columns of $\mW$ except for the $s_2$ columns with largest $\ell_1$ norm
    \STATE Let $\tilde \mW \in \R^{ n_1 \times n_2}$ be constructed by keeping the $s_1$ largest values of every column of $\bar \mW$ and setting to zero the other entries
    \STATE Let $\sW$ be the set all possible $\tilde \mW$
    % \STATE Let $\hat \mW$ be the matrix with where each column is the closest point in the $\ell_1$ sense belonging to the covering set $\sC(\gamma/3, \ball_{1,1}^{n_1} \cap \ball_{0,s_1}^{n_1}, \norm{\cdot}_1)$
    \STATE Let $\hat \mW$ be the closest matrix in the $\norm{\cdot}_{1,\infty}$ sense to the covering set of $\sW$ with $\norm{\tilde \mW - \hat \mW}_{1,\infty} \leq \gamma/3$
    \STATE \textbf{Return:} $\hat \mW$ 
  \end{algorithmic}
  \caption{$\mathrm{MatrixCompress}\parents{\cdot, \gamma}$}
  \label{alg:compress}
\end{algorithm}

By construction, using this algorithm guarantees that the error is bounded, as stated in the following lemma.  
\begin{lemma}\label{lmm:nn-layer-capacity}
Let $\mW$ be an effectively joint $\bar s_2$-sparse matrix with effectively $\bar s_1$-sparse columns, such that $\norm{\mW}_{1,\infty} \leq 1$. If $\hat \mW = \mathrm{MatrixCompress}\parents{\mW, \gamma}$, then
\[
  \norm{\mW - \hat \mW}_{1,\infty} \leq \gamma \, ,
\]
where $\hat \mW$ belongs to a discrete set $\sC$ such that $\log \card{\sC} \leq \tilde \ord \parents{\norm{\mW}_{1,\infty}^2 \bar s_1 \bar s_2 / \gamma^2 }$.
\end{lemma}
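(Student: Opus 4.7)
The plan is to bound $\norm{\mW - \hat \mW}_{1,\infty}$ via a three-term triangle inequality
\[
\norm{\mW - \hat \mW}_{1,\infty} \leq \norm{\mW - \bar \mW}_{1,\infty} + \norm{\bar \mW - \tilde \mW}_{1,\infty} + \norm{\tilde \mW - \hat \mW}_{1,\infty},
\]
whose three summands correspond exactly to the three stages of $\mathrm{MatrixCompress}$: column pruning, per-column entry pruning, and covering-set projection. The algorithmic choices of $s_1$ and $s_2$ are engineered so that each summand is at most $\gamma/3$, and an independent count of the covering set $\sC$ will then yield the cardinality claim.

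For the first term, zeroing out all but the $s_2$ columns of largest $\ell_1$-norm makes $\norm{\mW - \bar \mW}_{1,\infty}$ equal to the largest $\ell_1$-norm among the discarded columns. Writing $\mW = (\vw_1, \dots, \vw_{n_2})$ and $\vec v = (\norm{\vw_1}_1, \dots, \norm{\vw_{n_2}}_1)^\T$, this is exactly the $\ell_\infty$-error of the best $s_2$-sparse approximation of $\vec v$. The second inequality of Lemma \ref{lmm:sparse-approx-error} combined with effective joint $\bar s_2$-sparsity yields
\[
\norm{\mW - \bar \mW}_{1,\infty} \leq \frac{\norm{\vec v}_1}{s_2} = \frac{\norm{\mW}_{1,1}}{s_2} \leq \frac{\bar s_2 \norm{\mW}_{1,\infty}}{s_2} = \frac{\gamma}{3},
\]
by the choice of $s_2$. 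For the second term, every column of $\tilde \mW$ is the best $s_1$-sparse approximation of the corresponding column of $\bar \mW$, and each column $\vw_j$ of $\mW$ is effectively $\bar s_1$-sparse, so $\norm{\vw_j}_{1/2} \leq \bar s_1 \norm{\vw_j}_1 \leq \bar s_1 \norm{\mW}_{1,\infty}$. Applying the first inequality of Lemma \ref{lmm:sparse-approx-error} column-wise,
\[
\norm{\bar \mW - \tilde \mW}_{1,\infty} \leq \max_j \frac{\norm{\vw_j}_{1/2}}{4 s_1} \leq \frac{\bar s_1 \norm{\mW}_{1,\infty}}{4 s_1} = \frac{\gamma}{3},
\]
by the choice of $s_1$. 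The third term is at most $\gamma/3$ by construction of $\hat \mW$ as the nearest element of the covering set.

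For the cardinality bound, each $\tilde \mW$ is specified by a subset of up to $s_2$ non-zero columns out of $n_2$, a subset of up to $s_1$ non-zero positions per column out of $n_1$, and the $s_1 s_2$ real-valued non-zero entries, each of magnitude at most $\norm{\mW}_{1,\infty}$. A $(\gamma/3)$-cover in the $\norm{\cdot}_{1,\infty}$ metric is obtained by discretizing every non-zero entry to precision $\gamma/(3 s_1)$, so that the $\ell_1$ error within each column is at most $\gamma/3$. Collecting the combinatorial and discretization counts,
\[
\log \card{\sC} \leq \tilde \ord(s_1 s_2) = \tilde \ord \parents{\frac{\norm{\mW}_{1,\infty}^2 \bar s_1 \bar s_2}{\gamma^2}},
\]
after substituting the chosen values of $s_1$ and $s_2$ and absorbing the $\log n_1$, $\log n_2$, $\log (1/\gamma)$ and $\log \norm{\mW}_{1,\infty}$ factors into $\tilde \ord(\cdot)$.

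No substantial obstacle is anticipated: the argument is a direct application of Lemma \ref{lmm:sparse-approx-error} in two different guises (once to the vector of column norms, once to individual columns) together with a routine volumetric covering argument. The only point requiring care is that $\norm{\cdot}_{1,\infty}$, being the $\ell_\infty$-norm of the vector of column $\ell_1$-norms, interacts cleanly both with the column-zeroing step (via the $\ell_\infty$-bound in Lemma \ref{lmm:sparse-approx-error}) and with the per-column $\ell_1$-approximation and discretization steps, so no mismatch between norms arises.
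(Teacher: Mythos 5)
Your proof is correct and follows essentially the same route as the paper's: the identical three-term triangle inequality matching the three stages of $\mathrm{MatrixCompress}$, with Lemma \ref{lmm:sparse-approx-error} applied once to the vector of column $\ell_1$-norms and once column-wise, each term bounded by $\gamma/3$ via the algorithm's choices of $s_1$ and $s_2$, and a covering count of order $\tilde\ord(s_1 s_2)$. The only cosmetic difference is that you build the cover by an explicit entry-wise grid of precision $\gamma/(3s_1)$, whereas the paper invokes the covering number of the sparse $\ell_1$-ball; both yield the same $\tilde\ord\parents{\norm{\mW}_{1,\infty}^2 \bar s_1 \bar s_2/\gamma^2}$ bound.
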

From this lemma we can see that the set of possible compressed matrices has reasonable size. 
Moreover, approximately sparse matrices can be compressed efficiently. 
This result leads us to the main contribution of this paper, which is stated in the following theorem.
\begin{theorem}\label{thm:nn-bound}
Assume $\vx \in \ball_{\infty, 1}^n$. Let $f_{\mW}$ be a $d$-layer neural network with ReLU activations, and effectively joint $\bar s_2^j$-sparse weight matrices with effectively $\bar s_1^j$-sparse columns for $j=1,\dots, d$. 
Let us assume that the network is rebalanced so that $\norm{\mW^1}_{1,\infty} = \cdots = \norm{\mW^d}_{1,\infty} = 1$. 
Then, given $\gamma > 0$ and $\epsilon < \gamma/4$, there exists a finite function set $\sG$ composed of the functions $f_{\hat \mW}$ such that for any $f_\mW$ the adversarial risk is bounded as
  \[
    L_0^\epsilon(f_{\hat \mW}) \leq \widehat L_\gamma^\epsilon(f_\mW) + \tilde \ord\left(\sqrt{
    \frac{d}{m} \parents{\frac{ 1 + \gamma/2 -  \epsilon }{ \gamma/2 - 2 \epsilon }}^2 \parents{ \sum_{j=1}^d \sqrt{\bar s_1^j \bar s_2^j} }^2
    } \right) \, 
  \]
  with high probability.
\end{theorem}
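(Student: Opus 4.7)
My plan is to compress each weight matrix of $f_\mW$ using Algorithm \ref{alg:compress} and then apply the abstract compression bound of Theorem \ref{thm:bound-main}. Concretely, I would invoke $\mathrm{MatrixCompress}(\mW^j,\gamma_j)$ with a layer-wise tolerance $\gamma_j$ to obtain $\hat\mW^j$; Lemma \ref{lmm:nn-layer-capacity} then gives $\|\mW^j-\hat\mW^j\|_{1,\infty}\leq\gamma_j$ and places $\hat\mW^j$ in a discrete set $\sC_j$ of size $\log|\sC_j|\leq\tilde\ord(\bar s_1^j\bar s_2^j/\gamma_j^2)$ (since $\|\mW^j\|_{1,\infty}=1$). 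Taking $\sG=\{f_{\hat\mW}:\hat\mW^j\in\sC_j\text{ for all }j\}$ then yields $\log|\sG|\leq\sum_j\log|\sC_j|\leq\tilde\ord\parents{\sum_j \bar s_1^j\bar s_2^j/\gamma_j^2}$.

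The core analytic task is to bound $\sup_{\|\veta\|_\infty\leq\epsilon}\|f_\mW(\vx+\veta)-f_{\hat\mW}(\vx+\veta)\|_\infty$ by $\gamma/2$, since this controls $\tfrac{1}{2}|\margin_\epsilon(f_\mW;\vx,y)-\margin_\epsilon(f_{\hat\mW};\vx,y)|$ and hence delivers $(\gamma,\epsilon,\sS)$-compressibility. I would apply the triangle inequality
\[
\|f_\mW(\vx+\veta)-f_{\hat\mW}(\vx+\veta)\|_\infty \leq \|f_\mW(\vx+\veta)-f_\mW(\vx)\|_\infty + \|f_\mW(\vx)-f_{\hat\mW}(\vx)\|_\infty + \|f_{\hat\mW}(\vx)-f_{\hat\mW}(\vx+\veta)\|_\infty,
\]
and iterate Lemma \ref{lmm:lipschitz} layer by layer. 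The first term is bounded by $\epsilon\prod_j\|\mW^j\|_{1,\infty}=\epsilon$ by the rebalancing hypothesis, the third by $\epsilon\prod_j\|\hat\mW^j\|_{1,\infty}\leq\epsilon\prod_j(1+\gamma_j)$, and the middle piece (the ``clean'' compression error with $\|\vx\|_\infty\leq1$) telescopes into $\sum_j\gamma_j\prod_{k<j}(1+\gamma_k)=\prod_j(1+\gamma_j)-1$. Collecting these terms and bounding the product in $\sum_j\gamma_j$ leads to a budget constraint of the form $\sum_j\gamma_j\leq\tilde\gamma$ with $\tilde\gamma$ of order $(\gamma/2-2\epsilon)/(1+\gamma/2-\epsilon)$, which is exactly the rational factor that appears in the statement and is positive precisely because $\epsilon<\gamma/4$.

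Given this budget, I would allocate the tolerance across layers proportionally to $\sqrt{\bar s_1^j\bar s_2^j}$, i.e.\ $\gamma_j=\tilde\gamma\sqrt{\bar s_1^j\bar s_2^j}/\sum_k\sqrt{\bar s_1^k\bar s_2^k}$, so that
\[
\sum_j\frac{\bar s_1^j\bar s_2^j}{\gamma_j^2}=\frac{d}{\tilde\gamma^2}\parents{\sum_{k=1}^d\sqrt{\bar s_1^k\bar s_2^k}}^2=d\parents{\frac{1+\gamma/2-\epsilon}{\gamma/2-2\epsilon}}^2\parents{\sum_{k=1}^d\sqrt{\bar s_1^k\bar s_2^k}}^2.
\]
Consequently $\log|\sG|$ inherits this bound up to logarithmic factors, and feeding $\log|\sG|/m$ into Theorem \ref{thm:bound-main} reproduces the claimed estimate for $L_0^\epsilon(f_{\hat\mW})$.

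The hard part will be turning the informal error-propagation into the precise constants: one must bound $\prod_j(1+\gamma_j)-1$ linearly in $\sum_j\gamma_j$ (using e.g. $\prod(1+\gamma_j)-1\leq e^{\sum\gamma_j}-1$), keep $\prod_j(1+\gamma_j)$ uniformly bounded throughout the induction, and verify that each compressed matrix still satisfies $\|\hat\mW^j\|_{1,\infty}\leq1+\gamma_j$ so that the Lipschitz constant of $f_{\hat\mW}$ (which drives the third triangle term) does not blow up. A secondary point worth flagging is that the allocation $\gamma_j\propto\sqrt{\bar s_1^j\bar s_2^j}$ is not the Lagrange-optimal choice for the objective $\sum_j\bar s_1^j\bar s_2^j/\gamma_j^2$ under a sum budget, but it is the choice that produces the clean $\sum_j\sqrt{\bar s_1^j\bar s_2^j}$ factor appearing in the statement.
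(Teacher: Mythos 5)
Your proposal is correct and follows essentially the same route as the paper: compress each layer with Algorithm~\ref{alg:compress}, control the accumulated $\norm{\cdot}_{1,\infty}$ error via Lemma~\ref{lmm:lipschitz}, allocate the per-layer tolerance proportionally to $\sqrt{\bar s_1^j \bar s_2^j}$ (the paper makes the same non-Lagrange-optimal choice you flag), sum the covering numbers from Lemma~\ref{lmm:nn-layer-capacity}, and invoke Theorem~\ref{thm:bound-main}. The only difference is organizational --- you split the margin discrepancy into three whole-network terms and telescope the clean compression error, whereas the paper runs a single forward induction on the combined error $\veta^i$ with layer budgets $\epsilon^i - \epsilon^{i-1}$ --- and both yield the same $(1+\gamma/2-\epsilon)/(\gamma/2-2\epsilon)$ factor up to constants.
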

This result proves a bound with no explicit dimension dependence, which is also independent from the number of classes. 
On the other hand, there seems to be an unavoidable dependence with $\sqrt d$.
However, this dependence is also present in the bounds for multi-layer neural networks, derived in related works \cite{Khim2018AdversarialRB,Tu2018TheoreticalAO}.

Finally, we conduct a experiment to corroborate these findings. 
To that end, we train a fully connected neural network of $3$ layers with ReLU activations on the MNIST dataset. 
After preprocessing, the inputs are $1024$-dimensional vectors with $\ell_\infty$ norm bounded by one. 
\begin{wrapfigure}{r}{0.5\linewidth}
	\centering
	\includegraphics[width=0.99\linewidth]{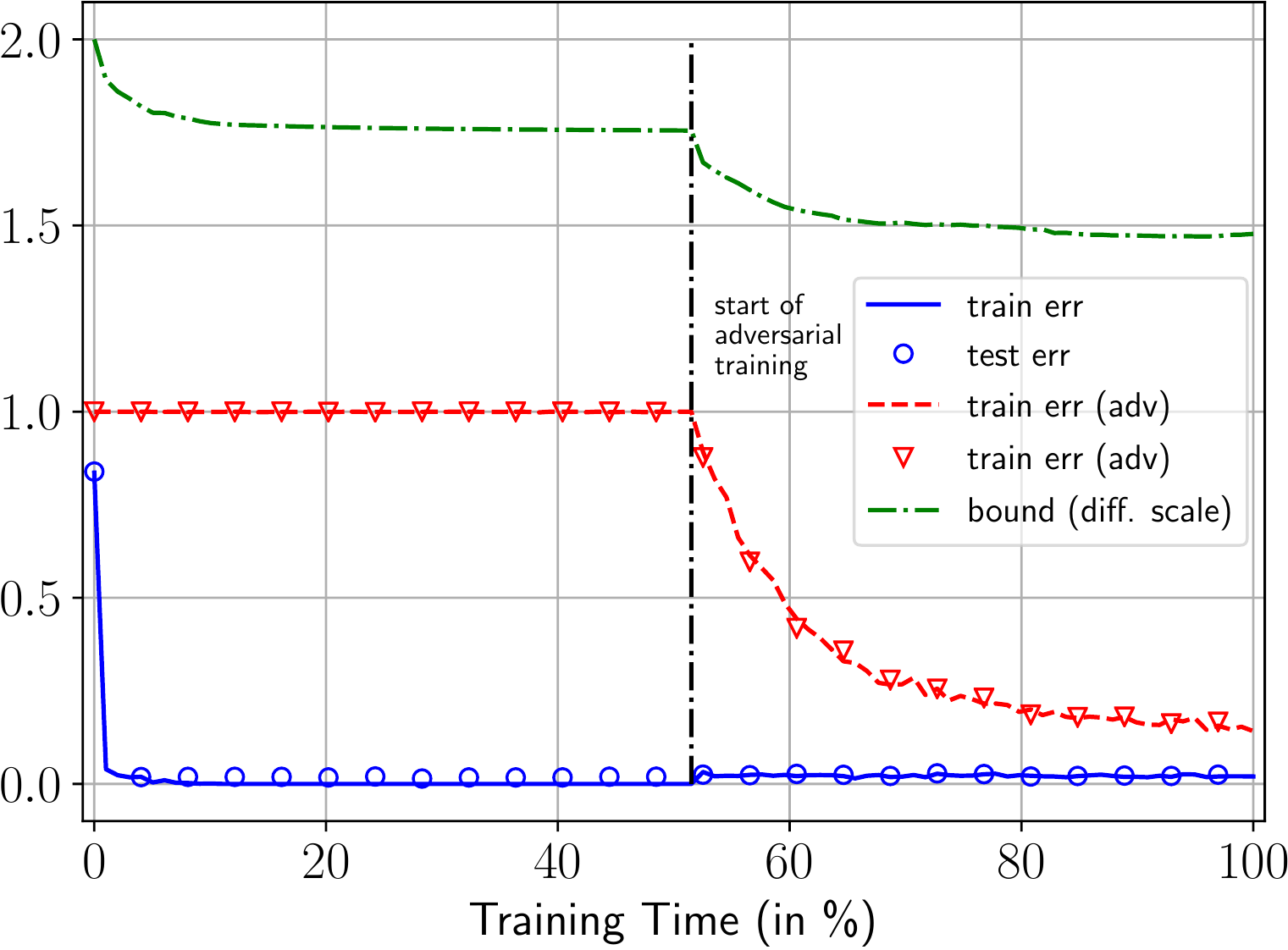}
	\caption{Generalization bound for a vanilla neural network on the MNIST dataset. Adversarial training improves the bound while standard training does not.}
	\label{fig:mnist-bound}
\end{wrapfigure}
The weight matrices are of size $1024 \times 500$, $500 \times 150$ and $150 \times 10$.
To estimate the adversarial risk, we use the projected gradient descent (PGD) attack \cite{pgd_attack} with $\ell_\infty$-norm bounded by $0.2$ and perturbations computed through $10$ iterations of the PGD algorithm. 
This PGD method is the state of the art algorithm for adversarial training.
In figure \ref{fig:mnist-bound} the network is first trained without using adversarial examples. Then, after $50\%$ of the training time, we start introducing adversarial examples to training set. 
These is carried out using the PGD method as described above, except for $0.2$ bound on the perturbation's $\ell_\infty$-norm. 
Instead, we start with a $0.05$ norm bound and slowly increase it until reaching $0.2$.
The script for this experiment is given as supplementary material. 
We can see our result from Theorem $\ref{thm:nn-bound}$ correlates well with the adversarial risk, as it starts decreasing when adversarial training begins.
\begin{figure*}[bh]
	\centering
	\includegraphics[width=0.49\linewidth]{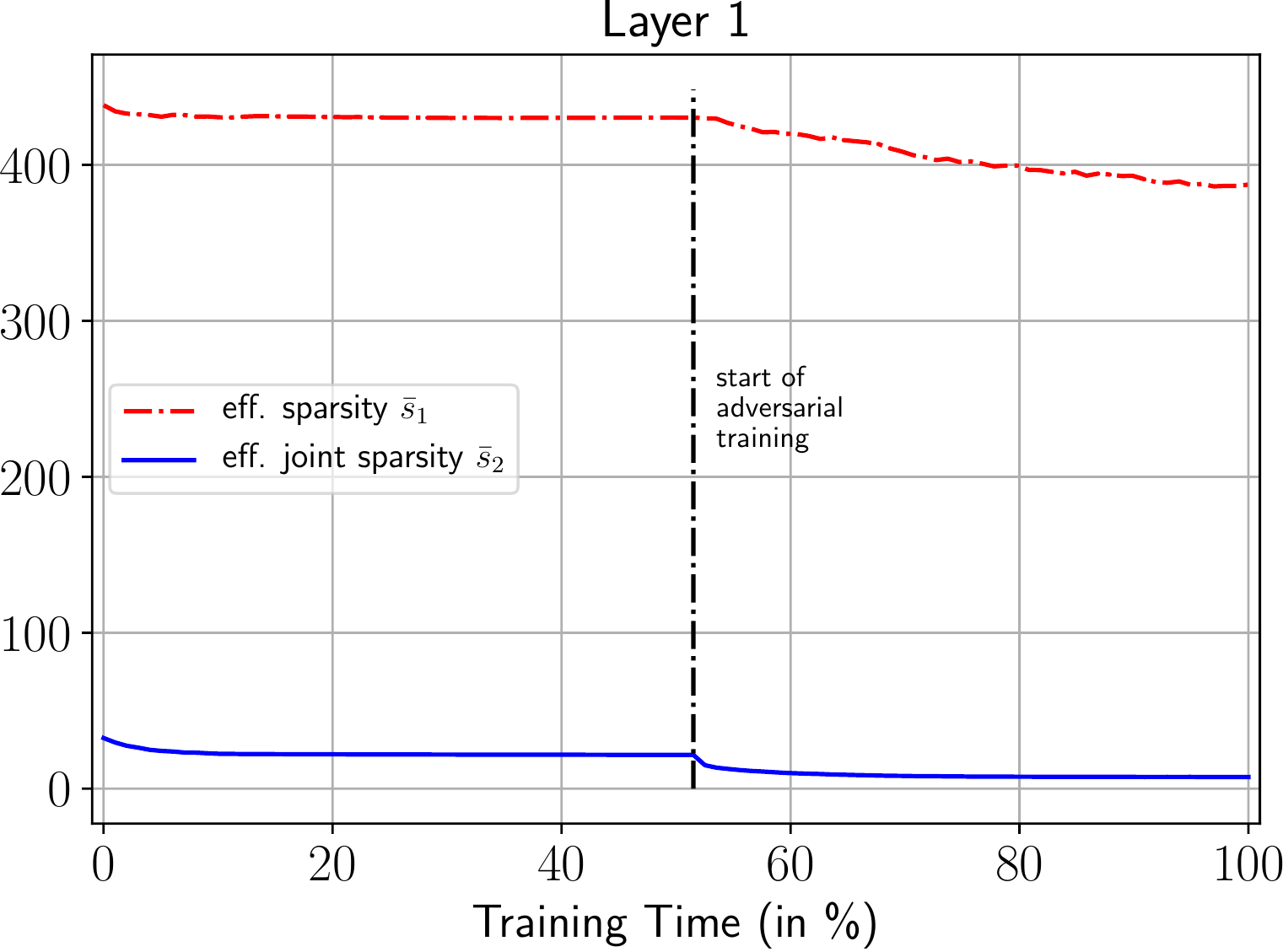}
	\includegraphics[width=0.49\linewidth]{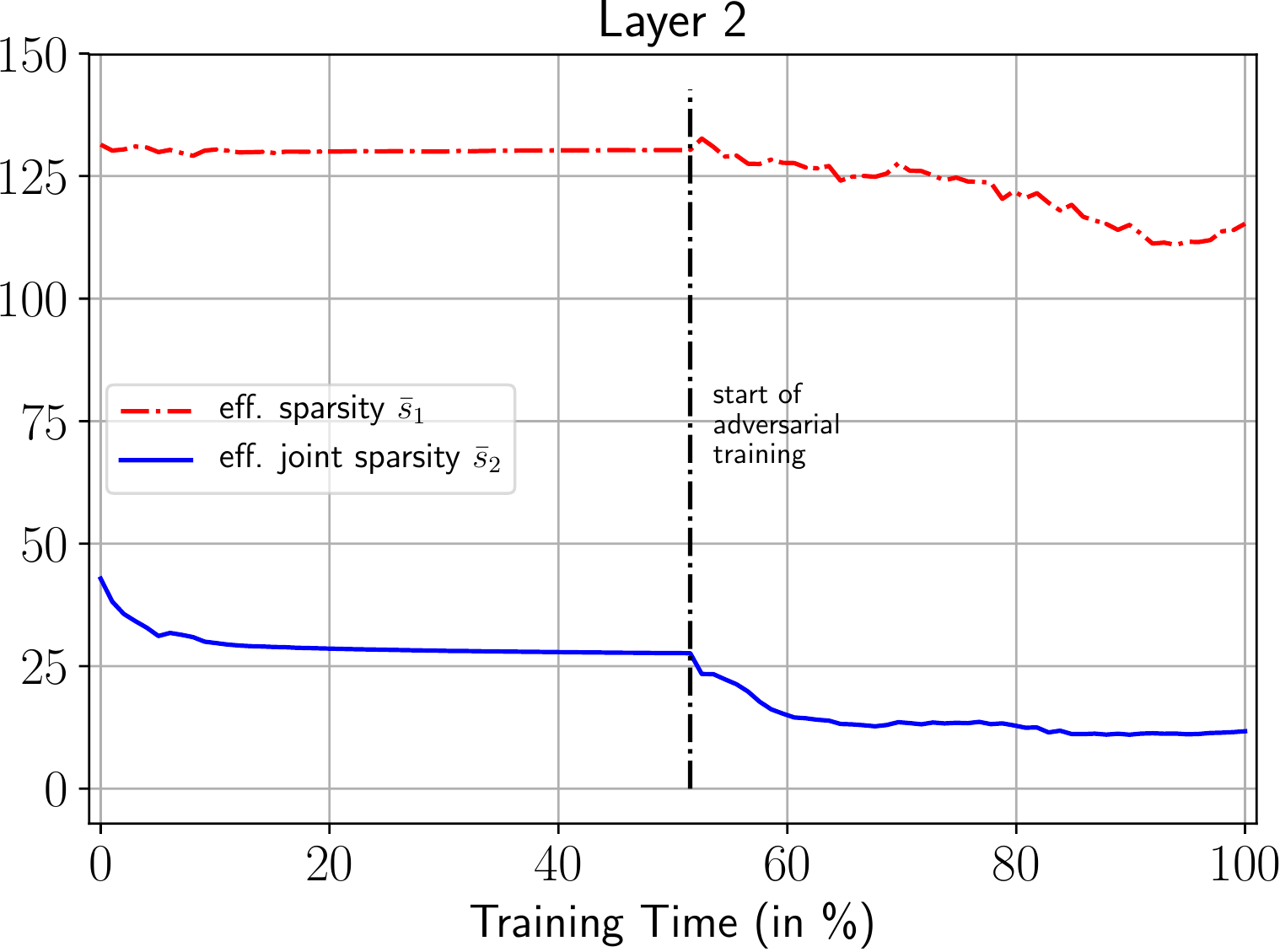}

	\includegraphics[width=0.49\linewidth]{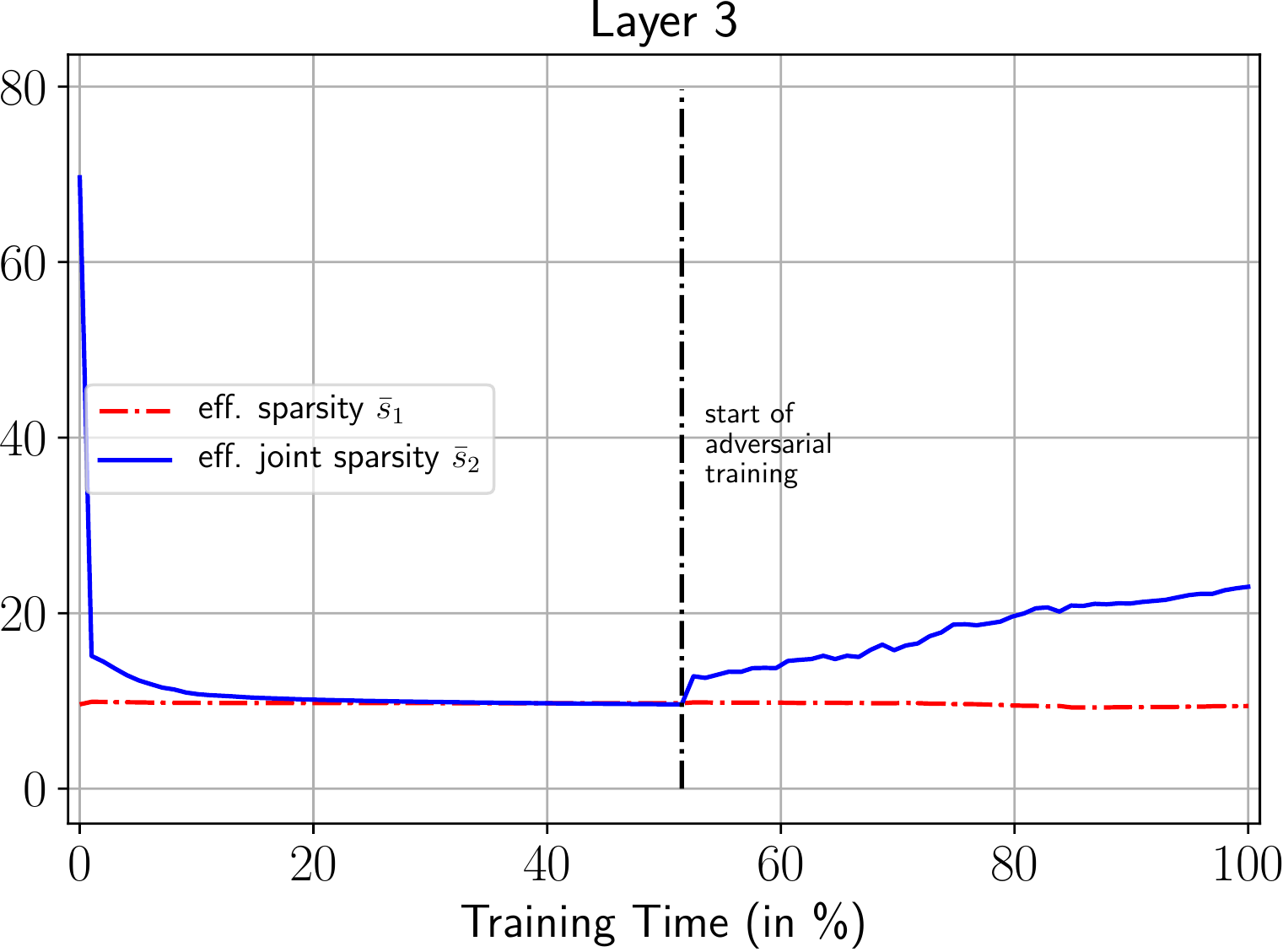}
	\caption{Experiment on the MNIST dataset. Effective sparsity and effective joint sparsity of the weight matrices, at every layer, of a vanilla neural network. These quantities tend to improve with adversarial training.}
	\label{fig:nn-norms}
\end{figure*}
Additionally, we compute the effective sparsity and effective joint sparsity of the weight matrices. In Figure \ref{fig:nn-norms}, we see how these quantities correlate well with the adversarial risk as well. 
These findings show that inducing sparsity on weight matrices  does not only provide robustness, it also improves generalization of neural networks.

%%%%%%%%%%%%%%%%%%%%%%%%%%%%%%%%%%%%%%%%%
% \input{sparse-low-rank}
%%%%%%%%%%%%%%%%%%%%%%%%%%%%%%%%%%%%%%%%%

% section a_compression_approach (end)
% 

\cleardoublepage
\small
\bibliographystyle{plain}
\bibliography{main}

\begin{thebibliography}{10}

\bibitem{Arora2018StrongerGB}
Sanjeev Arora, Rong Ge, Behnam Neyshabur, and Yi~Zhang.
\newblock Stronger generalization bounds for deep nets via a compression
  approach.
\newblock In {\em ICML}, 2018.

\bibitem{Attias2018ImprovedGB}
Idan Attias, Aryeh Kontorovich, and Yishay Mansour.
\newblock Improved generalization bounds for robust learning.
\newblock In {\em ALT}, 2018.

\bibitem{Bartlett2017SpectrallynormalizedMB}
Peter~L. Bartlett, Dylan~J. Foster, and Matus Telgarsky.
\newblock Spectrally-normalized margin bounds for neural networks.
\newblock In {\em NIPS}, 2017.

\bibitem{Cullina2018PAClearningIT}
Daniel Cullina, Arjun~Nitin Bhagoji, and Prateek Mittal.
\newblock Pac-learning in the presence of evasion adversaries.
\newblock {\em CoRR}, abs/1806.01471, 2018.

\bibitem{Diochnos2018AdversarialRA}
Dimitrios~I. Diochnos, Saeed Mahloujifar, and Mohammad Mahmoody.
\newblock Adversarial risk and robustness: General definitions and implications
  for the uniform distribution.
\newblock {\em CoRR}, abs/1810.12272, 2018.

\bibitem{Farnia2019GeneralizableAT}
Farzan Farnia, Jesse~M. Zhang, and David Tse.
\newblock Generalizable adversarial training via spectral normalization.
\newblock {\em CoRR}, abs/1811.07457, 2019.

\bibitem{flexi}
Alhussein Fawzi, Omar Fawzi, and Pascal Frossard.
\newblock Analysis of classifiers' robustness to adversarial perturbations.
\newblock 02 2015.

\bibitem{NIPS2016_6331}
Alhussein Fawzi, Seyed-Mohsen Moosavi-Dezfooli, and Pascal Frossard.
\newblock Robustness of classifiers: from adversarial to random noise.
\newblock In D.~D. Lee, M.~Sugiyama, U.~V. Luxburg, I.~Guyon, and R.~Garnett,
  editors, {\em Advances in Neural Information Processing Systems (NIPS)},
  pages 1632--1640. Curran Associates, Inc., 2016.

\bibitem{rauhut}
Simon Foucart and Holger Rauhut.
\newblock {\em A Mathematical Introduction to Compressive Sensing}.
\newblock Birkh\"{a}user, 2013.

\bibitem{Golowich2018SizeIndependentSC}
Noah Golowich, Alexander Rakhlin, and Ohad Shamir.
\newblock Size-independent sample complexity of neural networks.
\newblock In {\em COLT}, 2018.

\bibitem{harness_nnn}
Ian Goodfellow, Jonathon Shlens, and Christian Szegedy.
\newblock Explaining and harnessing adversarial examples.
\newblock In {\em International Conference on Learning Representations (ICLR)},
  2015.

\bibitem{Khim2018AdversarialRB}
Justin Khim and Po-Ling Loh.
\newblock Adversarial risk bounds via function transformation.
\newblock 2018.

\bibitem{scale}
Alexey Kurakin, Ian~J. Goodfellow, and Samy Bengio.
\newblock Adversarial machine learning at scale.
\newblock In {\em International Conference on Learning Representations (ICLR)},
  2017.

\bibitem{Li2018OnTG}
Xingguo Li, Junwei Lu, Zhaoran Wang, Jarvis~D. Haupt, and Tuo Zhao.
\newblock On tighter generalization bound for deep neural networks: Cnns,
  resnets, and beyond.
\newblock {\em CoRR}, abs/1806.05159, 2018.

\bibitem{madry2018towards}
Aleksander Madry, Aleksandar Makelov, Ludwig Schmidt, Dimitris Tsipras, and
  Adrian Vladu.
\newblock Towards deep learning models resistant to adversarial attacks.
\newblock In {\em International Conference on Learning Representations (ICLR)},
  2018.

\bibitem{pgd_attack}
Aleksander Madry, Aleksandar Makelov, Ludwig Schmidt, Dimitris Tsipras, and
  Adrian Vladu.
\newblock Towards {Deep} {Learning} {Models} {Resistant} to {Adversarial}
  {Attacks}.
\newblock In {\em International Conference on Learning Representations}, 2018.

\bibitem{Mahloujifar2018CanAR}
Saeed Mahloujifar and Mohammad Mahmoody.
\newblock Can adversarially robust learning leverage computational hardness?
\newblock {\em CoRR}, abs/1810.01407, 2018.

\bibitem{Montasser2019VCCA}
Omar Montasser, Steve Hanneke, and Nathan Srebro.
\newblock Vc classes are adversarially robustly learnable, but only improperly.
\newblock {\em CoRR}, abs/1902.04217, 2019.

\bibitem{moosavi-dezfooli2018robustness}
Seyed-Mohsen Moosavi-Dezfooli, Alhussein Fawzi, Omar Fawzi, Pascal Frossard,
  and Stefano Soatto.
\newblock Robustness of classifiers to universal perturbations: A geometric
  perspective.
\newblock In {\em International Conference on Learning Representations (ICLR)},
  2018.

\bibitem{2018DeterministicPG}
Vaishnavh Nagarajan and Zico Kolter.
\newblock Deterministic pac-bayesian generalization bounds for deep networks
  via generalizing noise-resilience.
\newblock 2018.

\bibitem{Neyshabur2017ExploringGI}
Behnam Neyshabur, Srinadh Bhojanapalli, David McAllester, and Nathan Srebro.
\newblock Exploring generalization in deep learning.
\newblock In {\em NIPS}, 2017.

\bibitem{neyshabur2017pac}
Behnam Neyshabur, Srinadh Bhojanapalli, David McAllester, and Nathan Srebro.
\newblock A pac-bayesian approach to spectrally-normalized margin bounds for
  neural networks.
\newblock {\em arXiv preprint arXiv:1707.09564}, 2017.

\bibitem{Neyshabur2018TowardsUT}
Behnam Neyshabur, Zhiyuan Li, Srinadh Bhojanapalli, Yann LeCun, and Nathan
  Srebro.
\newblock Towards understanding the role of over-parametrization in
  generalization of neural networks.
\newblock {\em CoRR}, abs/1805.12076, 2018.

\bibitem{Rozsa2016AreAA}
Andras Rozsa, Manuel G{\"u}nther, and Terrance~E. Boult.
\newblock Are accuracy and robustness correlated.
\newblock {\em 2016 15th IEEE International Conference on Machine Learning and
  Applications (ICMLA)}, pages 227--232, 2016.

\bibitem{RozsaGB16b}
Andras Rozsa, Manuel G{\"{u}}nther, and Terrance~E. Boult.
\newblock Towards robust deep neural networks with {BANG}.
\newblock {\em CoRR}, abs/1612.00138, 2016.

\bibitem{sabour2015adversarial}
Sara Sabour, Yanshuai Cao, Fartash Faghri, and David~J Fleet.
\newblock Adversarial manipulation of deep representations.
\newblock In {\em International Conference on Learning Representations (ICLR)},
  2016.

\bibitem{Schmidt2018AdversariallyRG}
Ludwig Schmidt, Shibani Santurkar, Dimitris Tsipras, Kunal Talwar, and
  Aleksander Madry.
\newblock Adversarially robust generalization requires more data.
\newblock In {\em NeurIPS}, 2018.

\bibitem{Sinha2018CertifyingSD}
Aman Sinha, Hongseok Namkoong, and John~C. Duchi.
\newblock Certifying some distributional robustness with principled adversarial
  training.
\newblock In {\em ICLR}, 2018.

\bibitem{TanayG16}
Thomas Tanay and Lewis~D. Griffin.
\newblock A boundary tilting persepective on the phenomenon of adversarial
  examples.
\newblock {\em CoRR}, abs/1608.07690, 2016.

\bibitem{tropp2004topics}
Joel~Aaron Tropp.
\newblock {\em Topics in sparse approximation}.
\newblock PhD thesis, 2004.

\bibitem{Tu2018TheoreticalAO}
Zhuozhuo Tu, Jingwei Zhang, and Dacheng Tao.
\newblock Theoretical analysis of adversarial learning: A minimax approach.
\newblock {\em CoRR}, abs/1811.05232, 2018.

\bibitem{Xu2008RobustRA}
Huan Xu, Constantine Caramanis, and Shie Mannor.
\newblock Robust regression and lasso.
\newblock {\em IEEE Transactions on Information Theory}, 56:3561--3574, 2008.

\bibitem{xu2009robustness}
Huan Xu, Constantine Caramanis, and Shie Mannor.
\newblock Robustness and regularization of support vector machines.
\newblock {\em Journal of Machine Learning Research}, 10(Jul):1485--1510, 2009.

\bibitem{Yin2018RademacherCF}
Dong Yin, Kannan Ramchandran, and Peter Bartlett.
\newblock Rademacher complexity for adversarially robust generalization.
\newblock {\em CoRR}, abs/1810.11914, 2018.

\end{thebibliography}

%%%% MOVE TO SUPP-MATERIAL BEFORE SUBMISSION %%%%%%%%%%%%%%%%%%
\cleardoublepage
\appendix
\section{Deferred Proofs}
\begin{proof}(Theorem \ref{thm:bound-main}) 
  Since $\widehat L_0^\epsilon(g_A)$ is an average of $m$ i.i.d random variables with expectation equal to $L_0^\epsilon(g_A)$ we may use Hoeffdingen's inequality, yielding
  \[
  \P_{(\vx, y) \sim \datadist} \left[ \widehat L_0^\epsilon(g_A) - L_0^\epsilon(g_A) \geq \tau \right] \leq \exp \left( -2m \tau^2 \right) \,.
  \]
  Note that $|\sA| = \exp(\log \card{\sA})$. 
  Then, let us choose $\tau = \sqrt{\frac{\log \card{\sA}}{m}}$ and take an union bound over all $A \in \sA$, leading to  
  \[
  \P_{(\vx, y) \sim \datadist} \left[ \widehat L_0^\epsilon(g_A) - L_0^\epsilon(g_A) \geq \sqrt{\frac{\log \card{\sA}}{m}} \right] 
  \leq \exp(\log \card{\sA}) \exp \left( -2\log \card{\sA} \right) =  
  \exp (-\log \card{\sA}) \, .
  \]
  Since $f$ is $(\gamma, \epsilon, \sS)$-compressible via $g$, then  
  \[
  \forall \vx \in \sS: \quad |\margin_\epsilon(f;\vx,y) - \margin_\epsilon(g_A;\vx,y)| \leq \gamma \, ,
  \]
  which implies that 
  \[
  \widehat L_0^\epsilon(g_A) \leq \widehat L_\gamma^\epsilon(f) \, .
  \]
  Combining these results we get that 
  \[
  L_0^\epsilon(g_A) \leq \widehat L_\gamma^\epsilon(f) + \ord\left(\sqrt{\frac{\log \card{\sA}}{m}} \right) 
  \]
  with probability at least $1- \exp (-\log \card{\sA}) = 1 - 1/\card{\sA}$, which we consider as high probability.
\end{proof}
\begin{definition}[$\mathrm{CompressVector}(\gamma, \vw)$]
  Given $\vw \in \ball_{1,1}^n, \delta \in (0,1], \gamma >0$ and $ \epsilon>0$, let us define the random mapping $\mathrm{CompressVector}(\gamma, \cdot)$ which outputs $\hat \vw = (\hat w_1, \dots, \hat w_n)^\T = \mathrm{CompressVector}(\gamma, \vw)$ as follows
  \[
    \hat w_i = z_i w_i/p_i\, , 
    \quad \text{with} \quad 
    z_i \sim \Bern(p_i)
    \, \text{ and } 
    p_i = \frac{|w_i|}{\delta \gamma^2}(1+\epsilon)^2
    \, ,
  \]
  where $\Bern(p_i)$ denotes the Bernoulli distribution with probability $p_i$.
\end{definition}
% 
% 
% 

% subsection linear_classifier (end)

\begin{lemma}\label{lmm:vec-compress}
Given $\vw \in \ball_{1,1}^n, \delta \in (0,1], \gamma >0$ and $ \epsilon>0$.
If $\hat \vw = \mathrm{CompressVector}(\gamma, \vw)$ then
  \[
  \forall \vx \in \ball_{\infty,1}^n, \, y\in \sY :\quad 
  \P_{\hat \vw}\left[ 
  \abs{\margin_\epsilon(f_\vw; \vx, y) - \margin_\epsilon(f_{\hat \vw}; \vx, y) } \geq \gamma 
  \right] \leq \delta \, ,
  \]
  and the number of non-zero entries in $\hat \vw$ is less than $\ord((\log n) (1+\epsilon)^2 / \delta \gamma^2)$ with high probability.
\end{lemma}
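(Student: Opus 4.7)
The plan is to bound the margin discrepancy for each fixed $(\vx, y)$ by a sum of independent zero-mean random variables and then invoke Chebyshev's inequality, which matches the failure probability $\delta$ baked into the definition of $p_i$; the sparsity claim will then follow from a standard Chernoff bound on $\sum_i z_i$.

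First, using the closed form $\margin_\epsilon(f_\vw;\vx,y) = (2y-3)(\inner{\vw}{\vx} - \epsilon\norm{\vw}_1)$ from Section \ref{subsec:linear}, together with $\abs{\hat w_i} = z_i \abs{w_i}/p_i$, I would rewrite the margin gap as $(2y-3)\sum_i T_i$ where $T_i = a_i(1 - z_i/p_i)$ and $a_i \deq w_i x_i - \epsilon\abs{w_i}$. Since $\norm{\vx}_\infty \leq 1$, we have $\abs{a_i} \leq (1+\epsilon)\abs{w_i}$, and a direct calculation gives $\E T_i = 0$ and $\Var(T_i) = a_i^2(1-p_i)/p_i \leq a_i^2/p_i$. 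Plugging in $p_i = \abs{w_i}(1+\epsilon)^2/(\delta\gamma^2)$ collapses the bound to $\Var(T_i) \leq \abs{w_i}\delta\gamma^2$, so by independence and the assumption $\vw \in \ball_{1,1}^n$,
\[
\Var\Big(\sum_i T_i\Big) \leq \delta\gamma^2 \norm{\vw}_1 \leq \delta\gamma^2.
\]
Chebyshev's inequality then yields $\P\brackets{\abs{\sum_i T_i} \geq \gamma} \leq \delta$, which is precisely the claimed pointwise bound for each $(\vx,y)$.

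For the sparsity claim, $\sum_i z_i$ is a sum of independent Bernoullis with mean $\mu = \sum_i p_i = \norm{\vw}_1(1+\epsilon)^2/(\delta\gamma^2) \leq (1+\epsilon)^2/(\delta\gamma^2)$. A multiplicative Chernoff bound then gives $\sum_i z_i = \ord(\mu + \log n) = \ord((\log n)(1+\epsilon)^2/(\delta\gamma^2))$ with probability at least $1 - 1/\mathrm{poly}(n)$, matching the stated estimate.

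The one real subtlety will be handling coordinates for which the formal expression for $p_i$ exceeds $1$. The natural reading of the construction is $p_i \deq \min(1, \abs{w_i}(1+\epsilon)^2/(\delta\gamma^2))$, in which case such coordinates satisfy $z_i \equiv 1$ and contribute $T_i \equiv 0$ to the sum, leaving the variance and Chebyshev estimates above intact; no other step should pose a genuine difficulty.
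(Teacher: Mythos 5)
Your proposal is correct and follows essentially the same route as the paper: an unbiased estimate of the adversarial margin, a variance bound of $\delta\gamma^2$ via independence and $\norm{\vw}_1\le 1$, Chebyshev for the failure probability, and concentration of $\sum_i z_i$ for the sparsity claim (your factorization $\hat w_i x_i-\epsilon\abs{\hat w_i}=(z_i/p_i)a_i$ just packages the paper's separate variance/covariance computations more economically). Your remark about capping $p_i=\min(1,\abs{w_i}(1+\epsilon)^2/\delta\gamma^2)$ actually patches a detail the paper leaves implicit, and it is handled correctly since such coordinates contribute zero variance.
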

\begin{proof}(of Lemma \ref{lmm:vec-compress})
  
  Note that $\E [\hat w_i] = \frac{w_i}{p_i} \E[z_i] = w_i$ thus $\E [\hat \vw] = \vw$. 
  Similarly, $\E [|\hat w_i|] = \abs{\frac{w_i}{p_i}} \E[z_i] = |w_i|$ and since $\hat w_i$'s are independent we get $\E [\norm{\hat \vw}_1] = \norm{\vw}_1$. 
  This implies that 
  \[
  \E \margin_\epsilon(f_{\hat \vw}; \vx, y) = \E \brackets{ \inner{\hat\vw}{\vx} - \epsilon\norm{\hat \vw}_1 }
  = \inner{\vw}{\vx} - \epsilon\norm{\vw}_1 
  = \margin_\epsilon(f_{\vw}; \vx, y)  \, .
  \]
  Now lets compute the variance of $\hat w_i$ as 
  \[
  \Var\brackets{\hat w_i} = \E\brackets{\hat w_i^2} - \E\brackets{\hat w_i}^2 = (w_i/p_i)^2 p_i - w_i^2 = \frac{1-p_i}{p_i}w_i^2 \, .
  \]
  The same calculation yields 
  \[
  \Var\brackets{|\hat w_i|} = \frac{1-p_i}{p_i}w_i^2 \, .
  \]
  The covariance between $\abs{\hat w_i}$ and $ \hat w_i$ is
  \[
  \Cov\parents{\abs{\hat w_i}, \hat w_i} = \E\brackets{|\hat w_i|\hat w_i} - \E[|\hat w_i|]\E[\hat w_i]
  = \frac{1-p_i}{p_i}|w_i|w_i \, .
  \]
  Now putting all together we get
  \begin{align*}
    \Var\brackets{\hat w_i x_i - \epsilon |\hat w_i|} 
    &= x_i^2 \Var[\hat w_i] - 2 \epsilon x_i \Cov(\hat w_i, |\hat w_i|) +\epsilon^2 \Var[|w_i|^2] \\
    &= \frac{1-p_i}{p_i}\parents{x_i^2 w_i^2 - 2 \epsilon x_i |w_i|w_i + \epsilon^2 w_i^2} \\
    &\leq \frac{w_i^2}{p_i}\parents{x_i^2 + 2 \epsilon |x_i| + \epsilon^2} \\
    &= \frac{\delta \gamma^2}{(1+\epsilon)^2}|w_i| \parents{x_i^2 + 2 \epsilon |x_i| + \epsilon^2} \, .
  \end{align*}
  Since $\hat w_i$'s are independent, we get
  \begin{align*}
    \Var\brackets{\inner{\hat\vw}{\vx} - \epsilon\norm{\hat \vw}_1} 
    &= \Var\brackets{\sum_{i=1}^n \hat w_i x_i - \epsilon |\hat w_i|} \\
    &= \sum_{i=1}^n \Var\brackets{\hat w_i x_i - \epsilon |\hat w_i|} \\
    &\leq \frac{\delta \gamma^2}{(1+\epsilon)^2} \sum_{i=1}^n |w_i| \parents{x_i^2 + 2 \epsilon |x_i| + \epsilon^2} \\
    &= \frac{\delta \gamma^2}{(1+\epsilon)^2}  \parents{\inner{|\vw|}{\vx^2} + 2 \epsilon \inner{|u|}{|c|}  +\epsilon^2 \norm{\vw}_1} &\text{($\vx^2$ is entry-wise)}\\
    &\leq \frac{\delta \gamma^2}{(1+\epsilon)^2} \parents{\norm{\vw}_1 \norm{\vx^2}_\infty 
    + 2 \epsilon \norm{\vw}_1 \norm{\vx}_\infty 
    + \epsilon^2 \norm{\vw}_1} \\
    &\leq \frac{\delta \gamma^2}{(1+\epsilon)^2} (1 + 2 \epsilon + \epsilon^2) = \delta \gamma^2 \,.
  \end{align*}
  By Chebyshev's inequality we get
  \[
  \P\brackets{| (\inner{\hat\vw}{\vx} - \epsilon\norm{\hat \vw}_1) - \inner{\vw}{\vx} - \epsilon\norm{\vw}_1 | > \gamma} \leq \delta \, .
  \]
  On the other hand the expected number of non-zero entries in $\hat \vw$ is given by 
  \[
  \E\brackets{\norm{\hat \vw}_0} = \sum_{i=1}^n p_i = \sum_{i=1}^n \frac{|w_i|}{\delta \gamma^2}(1+\epsilon^2) = \frac{(1+\epsilon)^2}{\delta \gamma^2} \, .
  \] 
  Then, by Hoefdingen's inequality the number of non-zero entries in $\hat \vw$ is less than $\ord((\log n) (1+\epsilon)^2 / \delta \gamma^2)$ with high probability.
\end{proof}
Now we handle discretization by clipping and then rounding in the following lemma.
\begin{lemma}\label{lmm:vec-compress-round}
  Let us define
  \begin{itemize}
    \item $\vw' $ component-wise as $w_i' = w_i \ind{|w_i| \geq \frac{\gamma}{4n(1+\epsilon)}}$,
    \item $\tilde \vw = \mathrm{CompressVector}(\gamma/2, \vw')$,
    \item $\hat \vw$ is obtained by rounding each entry of $\tilde \vw$ to the nearest multiple of $\frac{\gamma}{2n(1+ \epsilon)}$. 
  \end{itemize}
  Then we have that
  \[
  \forall \vx \in \ball^n_{\infty,1}, \, y\in \sY :\quad \P_{\hat \vw}\left[ \abs{\margin_\epsilon(f_\vw; \vx, y) - \margin_\epsilon(f_{\hat \vw}; \vx, y) } \geq \gamma \right] \leq \delta \, .
  \] 
\end{lemma}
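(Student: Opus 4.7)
The plan is to decompose the margin deviation $|\margin_\epsilon(f_\vw;\vx,y) - \margin_\epsilon(f_{\hat\vw};\vx,y)|$ through the triangle inequality along the chain $\vw \mapsto \vw' \mapsto \tilde\vw \mapsto \hat\vw$, allocating a budget of $\gamma/4$, $\gamma/2$, and $\gamma/4$ respectively. The key uniform estimate, which handles both deterministic steps, is that for any two weight vectors $\vu,\vu'$ and any $\vx \in \ball_{\infty,1}^n$,
\[
|\margin_\epsilon(f_\vu;\vx,y) - \margin_\epsilon(f_{\vu'};\vx,y)| \leq |\inner{\vu-\vu'}{\vx}| + \epsilon \bigl|\norm{\vu}_1 - \norm{\vu'}_1\bigr| \leq (1+\epsilon)\norm{\vu-\vu'}_1,
\]
using $\norm{\vx}_\infty \leq 1$ together with Hölder and the reverse triangle inequality.

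For the truncation step, the entries zeroed out satisfy $|w_i| < \gamma/(4n(1+\epsilon))$ and there are at most $n$ of them, so $\norm{\vw-\vw'}_1 \leq \gamma/(4(1+\epsilon))$, yielding a margin deviation of at most $\gamma/4$. For the rounding step, each coordinate of $\tilde\vw$ moves by at most half the grid size, namely $\gamma/(4n(1+\epsilon))$, hence $\norm{\tilde\vw-\hat\vw}_1 \leq \gamma/(4(1+\epsilon))$, again giving a margin deviation of at most $\gamma/4$. Both bounds are purely deterministic.

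For the stochastic middle step, I apply Lemma \ref{lmm:vec-compress} to $\vw'$ with margin parameter $\gamma/2$. This is legitimate because $\vw' \in \ball_{1,1}^n$: zeroing coordinates can only decrease the $\ell_1$ norm, so $\norm{\vw'}_1 \leq \norm{\vw}_1 \leq 1$. The lemma then yields
\[
\P_{\tilde\vw}\bigl[|\margin_\epsilon(f_{\vw'};\vx,y) - \margin_\epsilon(f_{\tilde\vw};\vx,y)| \geq \gamma/2\bigr] \leq \delta.
\]
Combining this with the two deterministic bounds via triangle inequality shows that the margin deviation between $\vw$ and $\hat\vw$ is at most $\gamma/4 + \gamma/2 + \gamma/4 = \gamma$ outside of a set of probability at most $\delta$, which is the claim.

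The main obstacle, such as it is, is careful constant bookkeeping: the truncation threshold $\gamma/(4n(1+\epsilon))$ and the rounding grid step $\gamma/(2n(1+\epsilon))$ are tuned with the factor $(1+\epsilon)$ in the denominator precisely to cancel the $(1+\epsilon)$ coming from the $\epsilon\norm{\cdot}_1$ term in the adversarial margin, and the allocation $\gamma/4 + \gamma/2 + \gamma/4$ just saturates the budget $\gamma$. A secondary point worth noting is that this construction also places $\hat\vw$ in a finite discrete set (multiples of $\gamma/(2n(1+\epsilon))$ with controlled support size), which is what will be needed to invoke Theorem \ref{thm:bound-main} downstream; this does not, however, enter the statement of the present lemma.
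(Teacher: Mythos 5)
Your proof is correct and follows essentially the same route as the paper's: a triangle-inequality decomposition along $\vw \mapsto \vw' \mapsto \tilde\vw \mapsto \hat\vw$ with the $(1+\epsilon)\norm{\cdot}_1$ estimate giving deterministic errors $\gamma/4$ for truncation and rounding, and Lemma \ref{lmm:vec-compress} supplying the probabilistic $\gamma/2$ for the middle step. If anything, your write-up is slightly more careful than the paper's, since you explicitly verify $\vw' \in \ball_{1,1}^n$ before invoking Lemma \ref{lmm:vec-compress} and correctly attribute the probabilistic bound to the pair $(\vw', \tilde\vw)$, where the paper's proof contains a typo naming $(\tilde\vw, \hat\vw)$ instead.
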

\begin{proof} (of Lemma \ref{lmm:vec-compress-round})
  We start by bounding the error incurred by clipping, that is
  \begin{align*}
    \abs{\margin_\epsilon(f_{\vw}; \vx, y) - \margin_\epsilon(f_{\vw'}; \vx, y) } 
    \leq& \abs{\inner{\vw}{\vx} - \inner{\vw'}{\vx}} + \epsilon \abs{\norm{\vw}_1 - \norm{\vw'}_1}
    \\
    \leq& \abs{\inner{\vw - \vw'}{\vx}} + \epsilon\norm{\vw - \vw'}_1
    \\
    \leq& \norm{\vw - \vw'}_1 \norm{\vx}_\infty + \epsilon\norm{\vw - \vw'}_1
    \\
    \leq &\norm{\vw - \vw'}_1 (1+ \epsilon) 
    \\
    \leq& \frac{\gamma}{4n(1+\epsilon)}n(1+\epsilon) = \gamma/4 \, . 
  \end{align*}
  Similarly, the error incurred by discretizing $\tilde \vw$ is bounded by
  \begin{align*}
    \abs{\margin_\epsilon(f_{\tilde \vw}; \vx, y) - \margin_\epsilon(f_{\hat \vw}; \vx, y) } 
    \leq &\norm{\tilde \vw - \hat\vw}_1 (1+ \epsilon) 
    \\
    \leq& \frac{\gamma}{2n(1+\epsilon)}\frac n2 (1+\epsilon) = \gamma/4 \, . 
  \end{align*}
  By Lemma \ref{lmm:vec-compress} we know that with probability at least $1 - \delta$ we have that $\abs{\margin_\epsilon(f_{\tilde \vw}; \vx, y) - \margin_\epsilon(f_{\hat \vw}; \vx, y) } \leq \gamma/2$.
  Combining these three results yields
  \begin{align*}
    \abs{\margin_\epsilon(f_\vw; \vx, y) - \margin_\epsilon(f_{\hat \vw}; \vx, y) } 
    \leq \quad
    &\abs{\margin_\epsilon(f_{\vw}; \vx, y) - \margin_\epsilon(f_{\vw'}; \vx, y) }
    \\ 
    + &\abs{\margin_\epsilon(f_{\vw'}; \vx, y) - \margin_\epsilon(f_{\tilde \vw}; \vx, y) }
    \\
    + &\abs{\margin_\epsilon(f_{\tilde \vw}; \vx, y) - \margin_\epsilon(f_{\hat \vw}; \vx, y) }
    \\
    \leq \quad  &\gamma/4 + \gamma/2 + \gamma/4 \leq \gamma
  \end{align*}
  with probability at least $1 - \delta$.
\end{proof}

\begin{theorem}\label{thm:linear-bound}
  With high probability
  \[
    L_0^\epsilon(f_{\hat \vw}) \leq \widehat L_\gamma^\epsilon(f_\vw) + 
    \tilde \ord \parents{\parents{\frac{(1+\epsilon)^2}{\gamma^2 m}}^{1/3}} \, , 
  \]
  where $\tilde \ord(\cdot)$ ignores logarithmic factors.
\end{theorem}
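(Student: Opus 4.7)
The plan is to combine Lemma \ref{lmm:vec-compress-round} with Corollary \ref{cor:bound-main} and then balance $\delta$ against the sample-complexity term. Recall that the compressed classifier $\hat \vw$ from Lemma \ref{lmm:vec-compress-round} is a vector whose nonzero entries are integer multiples of $\gamma/(2n(1+\epsilon))$ and whose values lie in $[-1,1]$ (after clipping), and by Lemma \ref{lmm:vec-compress} has at most $N = \tilde\ord\parents{(1+\epsilon)^2/(\delta \gamma^2)}$ nonzero entries with high probability. Thus $\hat \vw$ takes values in a discrete set $\sA$ whose cardinality I can bound by choosing (i) which $N$ of the $n$ coordinates are nonzero and (ii) which of $\ord(n(1+\epsilon)/\gamma)$ rounded values each nonzero entry takes. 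A standard counting bound then yields
\[
\log \card{\sA} \leq \tilde \ord\parents{\frac{(1+\epsilon)^2}{\delta \gamma^2}},
\]
where the $\tilde\ord$ absorbs logarithmic factors in $n$, $\gamma$, $\epsilon$.

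Next I would invoke the failure-tolerant version of the compression theorem. Lemma \ref{lmm:vec-compress-round} shows that, for each fixed $(\vx_i, y_i)\in \sS$, the event $\abs{\margin_\epsilon(f_\vw;\vx_i,y_i)-\margin_\epsilon(f_{\hat\vw};\vx_i,y_i)}\leq \gamma$ holds with probability at least $1-\delta$ over the randomness of the compression. By Markov's inequality applied to the fraction of training points on which compression fails, with constant probability $f_\vw$ is $(\gamma,\epsilon,\sS')$-compressible on a subset $\sS'\subseteq \sS$ of size at least $(1-\ord(\delta))m$. Corollary \ref{cor:bound-main} then gives, with high probability,
\[
L_0^\epsilon(f_{\hat\vw}) \leq \widehat L_\gamma^\epsilon(f_\vw) + \ord\parents{\sqrt{\frac{\log \card{\sA}}{m}}} + \ord(\delta)
\leq \widehat L_\gamma^\epsilon(f_\vw) + \tilde \ord\parents{\sqrt{\frac{(1+\epsilon)^2}{\delta \gamma^2 m}}} + \ord(\delta).
\]

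Finally, I would optimize the free parameter $\delta\in(0,1]$. Setting the two error terms equal, i.e.\ $\delta \asymp \sqrt{(1+\epsilon)^2/(\delta\gamma^2 m)}$, yields $\delta^3 \asymp (1+\epsilon)^2/(\gamma^2 m)$, so the optimal choice is
\[
\delta = \parents{\frac{(1+\epsilon)^2}{\gamma^2 m}}^{1/3},
\]
which plugged back in gives the claimed $\tilde\ord\parents{((1+\epsilon)^2/(\gamma^2 m))^{1/3}}$ bound.

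The main obstacles are bookkeeping rather than conceptual. The principal subtlety is making the ``fails on a $\delta$ fraction of $\sS$'' step rigorous: Lemma \ref{lmm:vec-compress-round} is a per-point statement, so I need a union/Markov argument over $\sS$ (as in the proof of Corollary \ref{cor:bound-main}) to convert it into a guarantee that compression succeeds simultaneously on most of $\sS$, before Corollary \ref{cor:bound-main} applies. The other routine check is that the discretization grid used in Lemma \ref{lmm:vec-compress-round} does not inflate $\log\card{\sA}$ beyond the claimed $\tilde\ord\parents{(1+\epsilon)^2/(\delta\gamma^2)}$, so that the $m^{-1/3}$ rate (rather than a worse rate) survives after balancing $\delta$.
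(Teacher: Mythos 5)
Your proposal is correct and follows essentially the same route as the paper: discretize the randomized compression of Lemma \ref{lmm:vec-compress-round} into a finite set $\sA$ with $\log\card{\sA}\leq \tilde\ord\parents{(1+\epsilon)^2/(\delta\gamma^2)}$, apply Corollary \ref{cor:bound-main} with failure fraction $\delta$, and balance $\delta$ against $\sqrt{(1+\epsilon)^2/(\delta\gamma^2 m)}$ to obtain the $m^{-1/3}$ rate. If anything, you are slightly more careful than the paper, which asserts the $\delta$-fraction compressibility directly from the per-point failure probability without spelling out the Markov-over-$\sS$ step you flag.
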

\begin{proof} (of Theorem \ref{thm:linear-bound}]) 
  Let $\sA$ be the set of vectors with at most $\ord((\log n) (1+\epsilon)^2 / \delta \gamma^2)$ non-zero entries, where each entry is a multiple of $2 \gamma / 2n(1+ \epsilon)$ between $-\delta \gamma^2 / (1+\epsilon)^2$ and $\delta \gamma^2 / (1+\epsilon)^2$.
  Then $\card{\sA} = r^q$ with
  \[
  r = 2 \frac{\delta \gamma^2 / (1+\epsilon)^2}{2 \gamma / 2n(1+ \epsilon)}
  = \frac{4n \delta \gamma}{(1+\epsilon)} \, ,
  \qquad
  q = \frac{(1+\epsilon)^2}{\delta \gamma^2} \, .
  \]
  Let $\hat \vw$ be defined as in Lemma \ref{lmm:vec-compress-round}. Then, by Lemma \ref{lmm:vec-compress} we have that $\P_{\hat\vw}[\hat\vw \in \sA] \leq 1 - \delta$.
  We define $\sG = \{f_{\hat \vw}: \hat \vw \in \sA\}$. 
  Note that the mapping from $f_\vw$ to $f_{\hat \vw}$ fails (i.e., $\hat \vw \notin \sA$)
  with probability at most $\delta$, thus  corollary \ref{cor:bound-main} yields
  \[
  L_0^\epsilon(f_{\hat \vw}) \leq 
  \widehat L_\gamma^\epsilon(f_\vw) + \ord \parents{\sqrt{\frac{(1+\epsilon)^2 \log (n )\log \parents{\frac{4n \delta \gamma}{(1+\epsilon)}} }{\delta \gamma^2 m}}} + \delta 
  = \widehat L_\gamma^\epsilon(f_\vw) + \tilde \ord \parents{\sqrt{\frac{(1+\epsilon)^2}{\delta \gamma^2 m}}} + \delta 
  \]
  with high probability. 
  Then, we choose $\delta = ((1+\epsilon)^2/\gamma^2 m)^{1/3}$ which leads to 
  \[
  L_0^\epsilon(f_{\hat \vw}) \leq 
  \widehat L_\gamma^\epsilon(f_\vw) + \tilde \ord \parents{\parents{\frac{(1+\epsilon)^2}{\gamma^2 m}}^{1/3}} \, 
  \]
  with high probability.
\end{proof}

\begin{lemma}\label{lmm:sparsify-round-linear}
  Given an effectively $\bar s$-sparse vector $\vw \in \ball_{1,1}^n$, let us define $\vw' \in \ball_{1,1}^n$ as the $s$-sparse vector whose non-zero entries are the $s$-largest absolute entries of  $\vw$. In addition, the vector $\hat \vw$ is obtained by rounding each entry of $\vw'$ to the nearest multiple of $\gamma/s(1+\epsilon)$.
    % \item $\hat \vw$ is obtained by rounding each entry of $\tilde \vw$ to the nearest multiple of $\frac{\gamma}{2s(1+ \epsilon)}$. 
  % \end{itemize}
  If we choose $s = \bar s (1+\epsilon)/2\gamma$ then 
  \[
  \forall \vx \in \ball_{\infty,1}^n, y \in \sY : \quad 
  \abs{\margin_\epsilon(f_\vw; \vx, y) - \margin_\epsilon(f_{\hat \vw}; \vx, y) } \leq \gamma\, .
  \]
\end{lemma}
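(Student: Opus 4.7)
The plan is to reduce the margin difference bound to a weighted $\ell_1$-norm bound on $\vw - \hat\vw$, and then split that bound via the triangle inequality into a sparsification error (handled by Lemma \ref{lmm:sparse-approx-error}) and a quantization error (handled by the choice of rounding grid). Both contributions are tuned so that each contributes at most $\gamma/2(1+\epsilon)$ to $\norm{\vw - \hat\vw}_1$, giving the desired $\gamma$ bound after multiplying by the factor $(1+\epsilon)$.

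First, I would expand
\[
\margin_\epsilon(f_\vw;\vx,y) - \margin_\epsilon(f_{\hat\vw};\vx,y) = (2y-3)\bigl[\inner{\vw-\hat\vw}{\vx} - \epsilon(\norm{\vw}_1 - \norm{\hat\vw}_1)\bigr],
\]
take absolute values, and apply $|2y-3|=1$, H\"older's inequality (using $\norm{\vx}_\infty \leq 1$), and the reverse triangle inequality on $\norm{\cdot}_1$. This yields the clean estimate
\[
\bigl|\margin_\epsilon(f_\vw;\vx,y) - \margin_\epsilon(f_{\hat\vw};\vx,y)\bigr| \leq (1+\epsilon)\,\norm{\vw - \hat\vw}_1,
\]
which is the same unifying bound already used in the proof of Lemma \ref{lmm:vec-compress-round}. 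From here it suffices to show $\norm{\vw - \hat\vw}_1 \leq \gamma/(1+\epsilon)$.

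Next I would split the error as $\norm{\vw - \hat\vw}_1 \leq \norm{\vw - \vw'}_1 + \norm{\vw' - \hat\vw}_1$. For the first term, Lemma \ref{lmm:sparse-approx-error} (applied to the best $s$-term approximation $\vw'$) together with the effective sparsity assumption gives
\[
\norm{\vw - \vw'}_1 \leq \tfrac{1}{4s}\norm{\vw}_{1/2} \leq \tfrac{\bar s}{4s}\norm{\vw}_1 \leq \tfrac{\bar s}{4s},
\]
since $\vw \in \ball_{1,1}^n$. For the second term, $\vw'$ has at most $s$ non-zero entries, each rounded to the nearest multiple of $\gamma/s(1+\epsilon)$, so every coordinate incurs an error of at most $\gamma/2s(1+\epsilon)$ and the zero coordinates stay zero, giving $\norm{\vw' - \hat\vw}_1 \leq \gamma/2(1+\epsilon)$.

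Plugging in $s = \bar s(1+\epsilon)/2\gamma$ makes the sparsification term equal to $\gamma/2(1+\epsilon)$ as well, so the two contributions sum to $\gamma/(1+\epsilon)$ and the overall margin perturbation is at most $\gamma$, as claimed. There is no real obstacle here: the argument is essentially a deterministic analog of the $\mathrm{CompressVector}$ calculation, with the only subtlety being the bookkeeping that ensures the choice of $s$ balances both error sources exactly, and that the calibration of the rounding grid $\gamma/s(1+\epsilon)$ (rather than the smaller $\gamma/2n(1+\epsilon)$ grid used in Lemma \ref{lmm:vec-compress-round}) still yields a covering set that is small enough to be useful when the resulting bound is later fed into Theorem \ref{thm:bound-main}.
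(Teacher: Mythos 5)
Your proposal is correct and follows essentially the same route as the paper's proof: the same reduction of the margin difference to $(1+\epsilon)\norm{\vw-\hat\vw}_1$ via H\"older and the reverse triangle inequality, the same split into a sparsification error controlled by Lemma \ref{lmm:sparse-approx-error} and a rounding error of $\gamma/2s(1+\epsilon)$ per non-zero coordinate, and the same choice of $s$ balancing the two at $\gamma/2$ each. The only cosmetic difference is that you apply the triangle inequality at the level of $\norm{\vw-\hat\vw}_1$ rather than at the level of the two margin differences, which changes nothing.
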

\begin{proof} (of Lemma \ref{lmm:sparsify-round-linear}) 
Let us first bound how much does sparsifying $\vw$ affects inner products, that is 
  \[
    \abs{\inner{\vw}{\vx} - \inner{\vw'}{\vx}} \leq \norm{\vw - \vw'}_1 \norm{\vx}_\infty \leq \norm{\vw - \vw'}_1 \, .
  \]
  This distorts the adversarial margin as follows:
  \begin{align*}
    \abs{\margin_\epsilon(f_\vw; \vx, y) - \margin_\epsilon(f_{\vw'}; \vx, y) } 
    & \leq \abs{\inner{\vw}{\vx} - \inner{\vw'}{\vx}} + \epsilon \abs{\norm{\vw}_1 - \norm{\vw'}_1} 
    \\
    & \leq \norm{\vw - \vw'}_1 + \epsilon \norm{\vw - \vw'}_1 
    & \text{(triangle inequality)}\\
    &= (1 + \epsilon) \norm{\vw - \vw'}_1 \\
    &\leq (1 + \epsilon) \frac{1}{4s}\norm{\vw}_{1/2} &\text{(Lemma \ref{lmm:sparse-approx-error})}\\
    &\leq (1 + \epsilon) \frac{\bar s}{4s}\norm{\vw}_1 &\text{(Definition of effective sparsity)}\\
    &= \gamma/2 \,. &\text{(Choice of $s$)}
  \end{align*}
  Similarly,
  \begin{align*}
    \abs{\margin_\epsilon(f_{\vw'}; \vx, y) - \margin_\epsilon(f_{\hat \vw}; \vx, y) } 
    &\leq (1 + \epsilon) \norm{\vw' - \hat \vw}_1 \\
    &\leq (1 + \epsilon) s \frac 12 \parents{\frac{\gamma}{s (1+\epsilon)} } \\
    &= \gamma/2 \,. 
  \end{align*}
  Putting all together we get
  \begin{align*}
    \abs{\margin_\epsilon(f_\vw; \vx, y) - \margin_\epsilon(f_{\hat \vw}; \vx, y) }
    &\leq \abs{\margin_\epsilon(f_\vw; \vx, y) - \margin_\epsilon(f_{\vw'}; \vx, y) } + \abs{\margin_\epsilon(f_{\vw'}; \vx, y) - \margin_\epsilon(f_{\hat \vw}; \vx, y) } \\
    &\leq  \gamma/2 + \gamma/2\, = \gamma.
  \end{align*}
\end{proof}
\begin{proof} (of Theorem \ref{thm:linear-sparse-bound})
  Let $\sA$ be the set of vectors with at most $\bar s (1+\epsilon)/2\gamma$ non-zero entries, where each entry is a multiple of $\gamma/s(1+\epsilon)$ between $-1$ and $1$.
  Then $\card{\sA} = r^q$ with
  \[
  r = \frac{2}{\gamma / s(1+\epsilon)}
  = \frac{2}{\gamma^2 / 2\bar s(1+\epsilon)^2} 
  = \frac{4 \bar s (1+\epsilon)^2}{\gamma^2}\, ,
  \qquad
  q = s = \bar s (1+\epsilon)/2\gamma \, .
  \]
  Let $\sG = \{f_{\hat \vw}: \hat \vw \text{ is defined as in Lemma \ref{lmm:sparsify-round-linear} with $\vw \in \ball_{1,1}^n$}\}$. Then, by Lemma \ref{lmm:sparsify-round-linear} we know that $f_\vw$ is ($\gamma, \epsilon, \sS$)-compressible via $\sG$, thus Theorem \ref{thm:bound-main} yields 
  \[
  L_0^\epsilon(f_{\hat \vw}) \leq 
  \widehat L_\gamma^\epsilon(f_\vw) + \ord \parents{\sqrt{\frac{2 \bar s(1+\epsilon) \log \parents{\frac{4 \bar s (1+\epsilon)^2}{\gamma^2}} }{\gamma m}}}  
  = \widehat L_\gamma^\epsilon(f_\vw) + \tilde \ord \parents{\sqrt{\frac{(1+\epsilon) \bar s}{\gamma m}}}
  \]
  with high probability. 
\end{proof}
\begin{proof} (of Lemma \ref{lmm:lipschitz})
Since $\phi$ is $1$-Lipschitz we have that for any vector $\vw$ of the same size as $\veta$ it holds
  \[
  \abs{\phi(\inner{\vw}{\vx}) - \phi(\inner{\vw}{\vx + \veta})} 
  \leq \abs{\inner{\vw}{\veta}}
  \leq \norm{\vw}_1 \norm{\veta}_\infty
  \leq \epsilon\norm{\vw}_1 \, .
  \]
  This proves the first inequality of the lemma.
  Similarly, for any $\vw$ and $\hat \vw$ it follows 
  \[
  \abs{\phi(\inner{\vw}{\vx}) - \phi(\inner{\hat \vw}{\vx})} 
  \leq \abs{\inner{\vw - \hat \vw}{\vx}}
  \leq \norm{\vw - \hat \vw}_1 \norm{\vx}_\infty
  \leq B \norm{\vw - \hat \vw}_1 \, ,
  \]
  thus implying the second inequality.
\end{proof}
\begin{proof}( of Lemma \ref{lmm:nn-layer-capacity})
Since $\mW$ is effectively joint sparse we can bound $\norm{\mW - \bar \mW}_{1, \infty}$ as follows
  \begin{align*}
    \norm{\mW - \bar \mW}_{1, \infty} 
    &\leq \frac{1}{s_2}\norm{\mW}_{1,1} &\text{(Lemma \ref{lmm:sparse-approx-error})}\\
    &\leq \frac{\bar s_2}{s_2}\norm{\mW}_{1,\infty} \, . &\text{(Definition of effective joint sparsity)}
  \end{align*}
  Similarly, since the remaining non-zero columns $\bar \mW$ are effectively sparse we get
  \begin{align*}
    \norm{\bar \mW - \tilde \mW}_{1, \infty} 
    &= \inf_{\mX : \norm{\mX}_{0,\infty}=s_1} \norm{\bar \mW - \mX}_{1,\infty} \\
    &\leq \frac{1}{4s_1} \norm{\bar \mW}_{1/2, \infty} &\text{(Lemma \ref{lmm:sparse-approx-error})}\\
    &\leq \frac{\bar s_1}{4s_1} \norm{\bar \mW}_{1, \infty} \, .
    &\text{(Definition of effective sparsity)}
  \end{align*}
  By the definition of $\hat \mW$ we have that $\norm{\tilde \mW - \hat \mW}_{1, \infty} \leq \gamma/3$. Combining all these statements, the choice of $s_1$ and $s_2$ (see Algorithm \ref{alg:compress}) yields
  \begin{align*}
    \norm{\mW - \hat \mW}_{1,\infty} 
    &\leq 
    \norm{\mW - \bar \mW}_{1, \infty} + 
    \norm{\bar \mW - \tilde \mW}_{1, \infty} + 
    \norm{\tilde \mW - \hat \mW}_{1, \infty}  \\
    &\leq \frac{\bar s_1}{4s_1} \norm{\mW}_{1,\infty} + \frac{\bar s_2}{s_2} \norm{\mW}_{1,\infty} + \frac \gamma 3 \\
    &\leq \frac \gamma 3 + \frac \gamma 3 + \frac \gamma 3 = \gamma\, .
  \end{align*}
  It remains to bound the covering number of $\sW$ with the mixed $(1,\infty)$-norm, denoted by $\sN(\sW, \norm{\cdot}_{1,\infty}, \gamma/3)$.
  By definition, the set $\sW$ is composed of all matrices $\tilde \mW$ with at most $s_2$ non-zero columns, where each column has at most $s_1$ non-zero entries and $\ell_1$-norm not greater than one.
  Since any $\tilde \mW \in \sW$ has at most $s_2$ non-zero columns we get
  \begin{align*}
    \sN(\sW, \norm{\cdot}_{1,\infty}, \gamma/3) &\leq
    \binom{n_2}{s_2} \sN(\gamma/3, \ball_{1,1}^{n_1} \cap \ball_{0,s_1}^{n_1}, \norm{\cdot}_1)^{s_2} \\
    &\leq \binom{n_2}{s_2} \brackets{\binom{n_1}{s_1} \sN(\gamma/3, \ball_{1,1}^{s_1}, \norm{\cdot}_1)}^{s_2} \\
    &\leq \parents{\frac{e n_2}{s_2}}^{s_2} \brackets{\parents{\frac{e n_1}{s_1}}^{s_1} \sN(\gamma/3, \ball_{1,1}^{s_1}, \norm{\cdot}_1)}^{s_2} \\
    &\leq \parents{\frac{e n_2}{s_2}}^{s_2} \parents{\frac{e n_1 }{s_1}}^{s_1 s_2}  \parents{1 + \frac 6 \gamma}^{s_1 s_2} \, .
    % &\leq \parents{\frac{4 \gamma e n_1 }{3\bar s_1 \norm{\mW}_{1,\infty} } \parents{1 + \frac 6 \gamma}}^{\frac{3\norm{\mW}_{1,\infty} \bar s_1 }{4\gamma} } 
    % \parents{\frac{\gamma e n_2}{3 \bar s_2 \norm{\mW}_{1,\infty} }}^{\frac{3\norm{\mW}_{1,\infty} \bar s_2 }{\gamma} } 
  \end{align*}
  This leads to 
  \begin{align*}
    \sN(\sW, \norm{\cdot}_{1,\infty}, \gamma/3) 
    &\leq \tilde \ord \parents{ s_1 s_2 } 
    = \tilde \ord \parents{\norm{\mW}_{1,\infty}^2 \bar s_1 \bar s_2 / \gamma^2 } \, .
  \end{align*}
  choosing $\sC$ to be the covering set of $\sW$ completes the proof.
\end{proof}
\begin{proof} (of Theorem \ref{thm:nn-bound})
  Let us assume that $\phi$ is the ReLU-activation. 
  Then, due to its positive homogeneity property, we re-balance the network by setting $\norm{\mW^i}_{1,\infty} =1$ for all $i=1,\dots,d$ without altering the classification function. 
  For any given adversarial noise $\veta_1$ with $\ell_\infty$ norm bounded by $\epsilon$, let us re-define $\vx^i$ as in \eqref{eq:nn-def} but with $\vx^0 = \vx + \veta_1$. 
  Similarly, for another adversarial noise $\veta_2$ with $\ell_\infty$-norm bounded by $\epsilon$ and compressed matrices $\hat \mW^i$, let us define the error vector of the $i$-th layer $\veta^i$ in a recursive fashion, that is  
  $\veta^i \deq \phi(\mW^{i^\T} \vx^{i-1}) - \phi(\hat\mW^{i^\T} (\vx^{i-1} + \veta^{i-1}))$ for $i=1,\dots, d$ with $\veta^0 \deq \veta_2 - \veta_1$. 
  Note that $\norm{\veta^0}_\infty \leq 2 \epsilon$.
  With this definition of $\vx^i$, since 
  \[
  \norm{\phi(\mW^{i^\T} \vx^{i-1})}_\infty 
  \leq \norm{\mW^{i^\T} \vx^{i-1}}_\infty 
  \leq \norm{\mW^{i^\T}}_\infty \norm{\vx^{i-1}}_\infty
  = \norm{\mW^{i}}_{1,\infty} \norm{\vx^{i-1}}_\infty 
  \]
  we have that $\norm{\vx^i}_\infty \leq \norm{\vx^0}_\infty \prod_{j=1}^i \norm{\mW^j}_{1,\infty} \leq 1+ \epsilon$. 

  Our first goal is to bound $\norm{\veta^i}_\infty$ for $i=1,2,\dots,d$, which we do by induction. 
  For any $i>0$, let us assume that $\norm{\veta^{i-1}} \leq \epsilon^{i-1}$ where $\epsilon^{i-1}$ is some positive value. 
  Given some $\epsilon^i > \epsilon^{i-1}$, we compress $\mW^i$ as $\hat \mW^i = \mrm{MatrixCompress}((\epsilon^i - \epsilon^{i-1})/(1+ \epsilon + \epsilon^{i-1}), \mW^i)$. Then, using Lemma \ref{lmm:lipschitz} we get
  \begin{align*}
    \norm{\veta^{i}}_\infty 
    &= \norm{\phi(\mW^{i^\T} \vx^{i-1}) - \phi(\hat\mW^{i^\T} (\vx^{i-1} + \veta^{i-1}))}_\infty \\
    &= \quad \norm{\phi(\mW^{i^\T} \vx^{i-1}) - \phi(\mW^{i^\T} (\vx^{i-1} + \veta^{i-1}))}_\infty 
    \\ &\quad +    
    \norm{\phi(\mW^{i^\T} (\vx^{i-1} + \veta^{i-1})) - \phi(\hat\mW^{i^\T} (\vx^{i-1} + \veta^{i-1}))}_\infty\\   
    &\leq \norm{\mW^i}_{1,\infty} \norm{\veta^{i-1}}_\infty + \norm{\mW^i - \hat \mW^i}_{1,\infty} \norm{\vx^{i-1} + \veta^{i-1}}_\infty &\text{(Lemma \ref{lmm:lipschitz})}\\
    &\leq \epsilon^{i-1} + \norm{\mW^i - \hat \mW^i}_{1,\infty} (1 + \epsilon + \epsilon^{i-1}) \\
    &\leq \epsilon^i \, . &\text{(Definition of $\hat \mW^i$)}
  \end{align*}
  Given $y$ and $f_\mW$, let us define $\tilde f_\mW(\vx) \deq [f_\mW(\vx)]_{j\neq y}$. By setting $\epsilon^0 \deq 2 \epsilon$ and $\epsilon^d \deq \gamma/2$ we get
  \begin{align*}
    &\abs{\margin_\epsilon(f_\mW; \vx, y) - \margin_\epsilon(f_{\hat \mW}; \vx, y) }
    \\
    &= \abs{
      [f_\mW(\vx +\veta_1)]_y - \max_{j\neq y}[f_\mW(\vx +\veta_1)]_j - 
      [f_{\hat\mW}(\vx +\veta_2)]_y + \max_{j\neq y}[f_{\hat\mW}(\vx +\veta_2)]_j
      }
    \\
    &= \abs{
      [f_\mW(\vx +\veta_1)]_y - \norm{\tilde f_\mW(\vx +\veta_1)}_\infty - 
      [f_{\hat\mW}(\vx +\veta_2)]_y + \norm{\tilde f_{\hat\mW}(\vx +\veta_2)}_\infty
      } 
    \\
    &\leq \abs{[f_\mW(\vx +\veta_1)]_y - [f_{\hat\mW}(\vx +\veta_2)]_y }
    + \abs{\norm{\tilde f_\mW(\vx +\veta_1)}_\infty - 
      \norm{\tilde f_{\hat\mW}(\vx +\veta_2)}_\infty}
    \\
    &\leq \norm{f_\mW (\vx +\veta_1) - f_{\hat \mW} (\vx +\veta_2)}_{\infty} + \norm{\tilde f_\mW (\vx +\veta_1) - \tilde f_{\hat \mW} (\vx +\veta_2)}_{\infty} 
    \\
    &\leq 2\norm{f_\mW (\vx +\veta_1) - f_{\hat \mW} (\vx +\veta_2)}_{\infty}
    \\
    &= 2\norm{\veta^{d}}_{\infty} 
    \leq \gamma
    \, .
  \end{align*}
  We are free to choose $\epsilon^1, \dots, \epsilon^{d-1}$ without loosing this bound on $\abs{\margin_\epsilon(f_\mW; \vx, y) - \margin_\epsilon(f_{\hat \mW}; \vx, y) }$, as long as $\epsilon^i > \epsilon^{i-1}$. 
  However, the choice of these values will determine the sample complexity of the compressed function class. 
  We choose these parameters as follows
  \[
  \epsilon^0 \deq 2 \epsilon \, , \qquad 
  \epsilon^i \deq \epsilon^{i-1} + \frac{ \sqrt{\bar s_1^i \bar s_2^i} }{ \sum_{j=1}^d \sqrt{\bar s_1^j \bar s_2^j} } (\gamma/2 - 2 \epsilon) \, .
  \]
  This rule allocates more error to the layers with more effective parameters 
  \footnote{A naive way of choosing $\epsilon^i$ like $\epsilon^i \deq i(\gamma/2 - 2 \epsilon)/d + 2 \epsilon$ will lead sample complexity of $\ord(d^2)$ instead of $\ord(d)$.}. 
  Note that this selection implies $\epsilon^d  = \gamma/2$ and $\epsilon^i > \epsilon^{i-1}$, so $f_\mW$ is $(\gamma, \epsilon, \sS)$-compressible via $\sG = \{f_{\hat \mW}: \hat \mW = \mrm{MatrixCompress}((\epsilon^i - \epsilon^{i-1})/(1+ \epsilon + \epsilon^{i-1}), \mW)\}$.
  In the same manner as in Lemma \ref{lmm:lipschitz}, for all $i=1,\dots, d$ let us define $\sC^i$ to be the set of all possible $\hat \mW^i$. 
  With this choice the logarithm of the logarithm of the cardinality of the compressed function class is 
  \begin{align*}
    \log \card{\sG} &= \log \prod_{i=1}^d \card{\sC^i}
    = \sum_{i=1}^d \log \card{\sC^i} \\
    &\leq \tilde\ord\parents{\sum_{i=1}^d \bar s_1^i \bar s_2^i (1+ \epsilon + \epsilon^{i-1})^2 / (\epsilon^i - \epsilon^{i-1})^2 } \\
    &\leq \tilde\ord\parents{\sum_{i=1}^d \frac{ \bar s_1^i \bar s_2^i (1+\epsilon + \gamma/2 - 2 \epsilon)^2 \parents{ \sum_{j=1}^d \sqrt{\bar s_1^j \bar s_2^j} }^2 }{ \parents{(\gamma/2 - 2 \epsilon )\sqrt{\bar s_1^i \bar s_2^i}}^2 } }\\
    &= \tilde\ord\parents{\sum_{i=1}^d \frac{ (1 + \gamma/2 -  \epsilon)^2 \parents{ \sum_{j=1}^d \sqrt{\bar s_1^j \bar s_2^j} }^2 }{ (\gamma/2 - 2 \epsilon )^2 } }
    \\
    &= \tilde\ord\parents{d \parents{\frac{ 1 + \gamma/2 -  \epsilon }{ \gamma/2 - 2 \epsilon }}^2 \parents{ \sum_{j=1}^d \sqrt{\bar s_1^j \bar s_2^j} }^2 }\,.
  \end{align*}
  % 
  % 
  % Then $\log \card{\hat \sH}  = \sum_{j=1}^d \log \card{\sW^j} \leq \tilde \ord \parents{\sum_{j=1}^d \norm{\mW^j}_{1,\infty}^2 \bar s_1^j \bar s_2^j / \beta^2  }$
  Finally, we apply Theorem \ref{thm:bound-main}, yielding
  \[
    L_0^\epsilon(f_{\hat \mW}) \leq \widehat L_\gamma^\epsilon(f_\mW) + \tilde \ord\left(\sqrt{
    \frac{d}{m} \parents{\frac{ 1 + \gamma/2 -  \epsilon }{ \gamma/2 - 2 \epsilon }}^2 \parents{ \sum_{j=1}^d \sqrt{\bar s_1^j \bar s_2^j} }^2
    } \right) \, .
  \]
\end{proof}
%%%%%%%%%%%%%%%%%%%%%%%%%%%%%%%%%%%%%%%%%%%%%%%%%%%%%%%%%%%%%%%
\end{document}